\newtheorem{asmp}{Assumption}[section]
\newtheorem{thm}{Theorem}
\newtheorem{coro}{Corollary}
\newtheorem{remark}{Remark}
\newtheorem{defn}{Definition}[section]
\DeclareMathOperator*{\diag}{diag}
\DeclareMathOperator*{\E}{\mathbb{E}}
\begin{document}

%

%

\twocolumn[

\aistatstitle{Interaction Matters: 
A Note on Non-asymptotic Local Convergence of Generative Adversarial Networks}

\aistatsauthor{ Tengyuan Liang \And James Stokes }

\aistatsaddress{ University of Chicago, Booth School of Business \And  University of Pennsylvania }

]

\begin{abstract}
Motivated by the pursuit of a systematic computational and algorithmic understanding of Generative Adversarial Networks (GANs), we present a simple yet unified non-asymptotic local convergence theory for smooth two-player games, which subsumes several discrete-time gradient-based saddle point dynamics. The analysis reveals the surprising nature of the off-diagonal interaction term as both a blessing and a curse. On the one hand, this interaction term explains the origin of the slow-down effect in the convergence of Simultaneous Gradient Ascent (SGA) to stable Nash equilibria. On the other hand, for the unstable equilibria, exponential convergence can be proved thanks to the interaction term, for four modified dynamics proposed to stabilize GAN training: Optimistic Mirror Descent (OMD), Consensus Optimization (CO), Implicit Updates (IU) and Predictive Method (PM). The analysis uncovers the intimate connections among these stabilizing techniques, and provides detailed characterization on the choice of learning rate. As a by-product, we present a new analysis for OMD proposed in \citet*{daskalakis2017training} with improved rates.
\end{abstract}

\section{Introduction}

In this paper we consider the non-asymptotic local convergence and stability of discrete-time gradient-based optimization algorithms for solving smooth two-player zero-sum games of the form,
\begin{equation}\label{e:minimax}
	\min_{\theta \in \mathbb{R}^p}\max_{\omega \in \mathbb{R}^q} U(\theta, \omega) \enspace .
\end{equation}
The motivation behind our non-asymptotic analysis follows from the observation that Generative Adversarial Networks (GANs) lack principled understanding at both the computational and algorithmic level. 
GAN optimization is a special case of \eqref{e:minimax}, which has been developed for learning a complex and multi-modal probability distribution based on samples from $\mathcal{P}_{\rm real}$ (over $\mathcal{X}$),  through learning a generator function $g_\theta(\cdot)$ that transforms the input distribution $\mathcal{P}_{\rm input}$ (over $\mathcal{Z}$) to match the target $\mathcal{P}_{\rm real}$. Ignoring the parameter regularization, the value function corresponding to a GAN is of the form,
\begin{align}\label{e:gan}
	U(\theta, \omega) =  \E h_1(f_{\omega}(X))- \E h_2(f_{\omega}(g_{\theta}(Z))) \enspace ,
\end{align}
where $X \sim \mathcal{P}_{\rm real}$, $Z \sim \mathcal{P}_{\rm input}$ and $(\theta, \omega) \in \mathbb{R}^p \times \mathbb{R}^q$ parametrizes the generator function $g_\theta : \mathcal{Z} \to \mathcal{X}$ and discriminator function $f_\omega : \mathcal{X} \to \mathbb{R}$, respectively.  The original GAN \citep{goodfellow2014generative}, for example, corresponds to choosing $h_1(t) = \log \sigma(t)$, $h_2(t) = -\log(1-\sigma(t))$ where $\sigma$ is the sigmoid function; Wasserstein GAN \citep{arjovsky2017wasserstein} considers $h_1(t) = h_2(t) = t$;  $f$-GAN \citep{nowozin2016f} proposes to use $h_1(t) = t, h_2(t) = f^*(t)$, where $f^*$ denotes the Fenchel dual of $f$. Recently, several attempts have been made to understand whether GANs learn the target distribution in the statistical sense \citep{liu2017approximation,arora2017gans,liang2017well, liang2018well, arora2017theoretical, liu2018inductive}.

Optimization of GANs (and value functions of the form \eqref{e:minimax} at large) is hard, both in theory and in practice \citep{singh2000nash, pfau2016connecting, salimans2016improved}. 
Global optimization of a general value function with multiple saddle points is impractical and unstable, so we instead resort to the more modest problem of searching for a \textit{local saddle point} $(\theta_*, \omega_*)$ such that no player has the incentive to deviate locally
\begin{align*}
	&U(\theta_*, \omega_*) \leq U(\theta, \omega_*) \enspace , ~~\text{for $\theta$ in an open nbhd of $\theta_*$} \enspace , \\
	&U(\theta_*, \omega_*) \geq U(\theta_*, \omega) \enspace , ~~\text{for $\omega$ in an open nbhd of $\omega_*$} \enspace .
\end{align*}
For smooth value functions, the above conditions are equivalent to the following solution concept:
\begin{defn}[Local Nash Equilibrium]
	\label{def:lne}
	$(\theta_*, \omega_*)$ is called a local Nash equilibrium if 
	\begin{enumerate}
		\item $\nabla_\theta U(\theta_*, \omega_*) = \mathbf{0}$, $\nabla_\omega U(\theta_*, \omega_*) = \mathbf{0}$; 
		\item $\nabla_{\theta \theta} U(\theta_*, \omega_*) \succeq \mathbf{0}$, $- \nabla_{\omega \omega} U(\theta_*, \omega_*) \succeq \mathbf{0}$.
	\end{enumerate}
\end{defn}
Here we use $\nabla_{\theta \omega} U(\theta, \omega)\in \mathbb{R}^{p \times q}$ to denote the off-diagonal term $\partial^2 U/\partial \theta \partial \omega$, and name it the \textit{interaction term} throughout the paper. $\nabla_\theta U(\theta, \omega) \in \mathbb{R}^p$ denotes the gradient $\partial U/\partial \theta$, and $\nabla_{\theta\theta} U(\theta, \omega) \in \mathbb{R}^{p\times p}$ for the Hessian $\partial^2 U/\partial \theta \partial \theta$.

In practice, discrete-time dynamical systems are employed to numerically approach the saddle points of $U(\theta, \omega)$, as is the case in GANs \citep{goodfellow2014generative}, and in primal-dual methods for non-linear optimization \citep{singh2000nash}. The simplest possibility is \textit{Simultaneous Gradient Ascent} (SGA), which corresponds to the following discrete-time dynamical system,
\begin{align}\label{e:sga}
	\theta_{t+1} &= \theta_t - \eta \nabla_{\theta} U(\theta_t, \omega_t) \enspace , \nonumber \\
	\omega_{t+1} &= \omega_t + \eta \nabla_{\omega} U(\theta_t, \omega_t) \enspace ,
\end{align}
where $\eta$ is the step size or learning rate. 
In the limit of vanishing step size, SGA approximates a continuous-time autonomous dynamical system, the asymptotic convergence of which has been established in \cite{singh2000nash, cherukuri2017saddle, nagarajan2017gradient}. In practice, however, it has been widely reported that the discrete-time SGA dynamics for GAN optimization suffers from instabilities due to the possibility of complex eigenvalues in the operator of the dynamical system \citep{salimans2016improved, metz2016unrolled, nagarajan2017gradient, mescheder2017numerics,heusel2017gans}.
We believe room for improvement still exists in the current theory, which we hope will render it to be more informative in practice:
\begin{itemize}
	\item \textbf{Non-asymptotic convergence speed.} \quad In practice, one is concerned with finite step size $\eta > 0$ which is typically subject to extensive hyperparameter tuning. Detailed characterizations on the convergence speed, and theoretical insights on the choice of learning rate can be helpful. 
	\item \textbf{Unified simple analysis for modified saddle point dynamics.} \quad Several attempts to fix GAN optimization have been put forth by independent researchers, which modify the dynamics \citep{mescheder2017numerics, daskalakis2017training, yadav2017stabilizing} using very different insights. A unified analysis that reviews the deeper connections amongst these proposals helps to better understand the saddle point dynamics at large.  
\end{itemize}

In this paper, we address the above points by studying the theory of non-asymptotic convergence of SGA and related discrete-time saddle point dynamics, namely, \textit{Optimistic Mirror Descent} (OMD), \textit{Consensus Optimization} (CO), Implicit Updates (IU) and \textit{Predictive Method} (PM). More concretely, we provide the following theoretical contributions about the crucial effect of the off-diagonal interaction term $\nabla_{\theta \omega}U(\theta, \omega)$ in two-player games:
\begin{itemize}
	\item \textbf{Stable case: curse of the interaction term.} \quad Locally, SGA converges \emph{exponentially} fast to a stable Nash equilibrium with a carefully chosen learning rate. This can be viewed as a generalization (rather than a special case) of the local convergence guarantee for single-player gradient descent for strongly-convex functions. In addition, we quantitatively isolate the slow-down in the convergence rate of two-player SGA compared to single-player gradient descent, due to the presence of the off-diagonal interaction term $\nabla_{\theta \omega}U(\theta, \omega)$ for the two-player game. 
	
	\item \textbf{Unstable case: blessing of the interaction term.} \quad For unstable Nash equilibria, SGA \emph{diverges} away for any non-zero learning rate. We discover a unified non-asymptotic analysis that encompasses four proposed modified dynamics\,---\, OMD, CO, IU and PM. The analysis shows that all these algorithms, at a high level, share the same idea of utilizing the curvature introduced by the interaction term $\nabla_{\theta \omega}U(\theta, \omega)$. Unlike the slow sub-linear rate of convergence experienced by single-player gradient descent for non-strongly convex functions\footnote{In fact, \cite{nesterov2013introductory} constructed a convex function that is non-strongly convex, such that all first order methods suffer slow sub-linear rate of convergence (in optimization literature, linear rate refers to exponential convergence speed).}, four modified dynamics effectively exploit the interaction term to achieve \emph{exponential} convergence to unstable Nash equilibria. The analysis also provides specific advice on the choice of learning rate for each procedure, albeit restricted to the simple case of bi-linear games.
\end{itemize}

The organization of the paper is as follows. In Section \ref{sec:convex-concave} we consider the situation when locally, the value function satisfies strict convexity/concavity. We show non-asymptotic exponential convergence to Nash equilibria for SGA, and identify an optimal learning rate.
To reveal and understand the new features of the modified dynamics, we study in Section \ref{sec:bilinear} the minimal unstable bilinear game, showing that proposed stabilizing techniques all achieve exponential convergence to unstable Nash equilibria.
Finally, in Section \ref{sec:experiments} we take a step closer to the real world by numerically evaluating each of the proposed dynamical systems using value functions of GAN form \eqref{e:gan}, under objective evaluation metrics. Proofs are deferred to the Appendix \ref{sec:Appendix}.

\section{Stable Case: Non-asymptotic Local Convergence}\label{sec:convex-concave}

In this section we will establish the non-asymptotic convergence of SGA dynamics to saddle points that are \textit{stable local Nash equilibria}. With a properly chosen learning rate, the local convergence can be intuitively pictured as cycling inwards to these saddle points, where the distance to the saddle point of interest is exponentially contracting. First, let's introduce the notion of stable equilibrium.
 
\begin{defn}[Stable Local Nash Equilibrium]
	\label{def:slne}
	$(\theta_*, \omega_*)$ is called a stable local Nash equilibrium if 
	\begin{enumerate}
		\item $\nabla_\theta U(\theta_*, \omega_*) = \mathbf{0}$, $\nabla_\omega U(\theta_*, \omega_*) = \mathbf{0}$; 
		\item $\nabla_{\theta \theta} U(\theta_*, \omega_*) \succ \mathbf{0}$, $- \nabla_{\omega \omega} U(\theta_*, \omega_*) \succ \mathbf{0}$.
	\end{enumerate}
\end{defn}
The above notion of stability is stronger than the Definition~\ref{def:lne}, in the sense that $\nabla_{\theta \theta} U(\theta_*, \omega_*)$ and $- \nabla_{\omega \omega} U(\theta_*, \omega_*)$ have smallest eigenvalues bounded away from $0$. 

\begin{asmp}[Local Strong Convexity-Concavity]
	\label{asmp:convex-concave}
	Consider $U(\theta, \omega): \mathbb{R}^p \times \mathbb{R}^q \rightarrow \mathbb{R}$ that is smooth and twice differentiable, and let $(\theta_*, \omega_*)$ be a stable local Nash equilibrium as in Definition~\ref{def:slne}. Assume that for some $r>0$, there exists an open neighborhood near $(\theta_*, \omega_*)$ such that for all $(\theta, \omega) \in B_2 \left( (\theta_*, \omega_*), r \right)$, the following strong convexity-concavity condition holds,
	\begin{align*}
		\nabla_{\theta \theta} U(\theta, \omega) \succ \mathbf{0}, ~ - \nabla_{\omega \omega} U(\theta, \omega) \succ \mathbf{0} \enspace .
	\end{align*}
\end{asmp}

It will prove convenient to introduce some notation before introducing the main theorem. Let us define the following block-wise abbreviation for the matrix of second derivatives,
\begin{align}
	\label{eq:abrev}
	\begin{bmatrix}
		\nabla_{\theta\theta} U( \theta, \omega) & \nabla_{\theta \omega}U(\theta, \omega) \\
		-\nabla_{\omega \theta} U( \theta, \omega) & -\nabla_{\omega \omega} U( \theta, \omega) 
	\end{bmatrix} := 
	\begin{bmatrix}
			A_{\theta, \omega} & C_{\theta, \omega}\\
			-C_{\theta, \omega}^T & B_{\theta, \omega}
\end{bmatrix} \enspace ,
\end{align}
and define $\alpha, \beta$ as
\begin{align}
	\label{eq:alpha_beta}
	\alpha &:= \min_{(\theta, \omega) \in B_2((\theta_*, \omega_*), r)}\lambda_{\min}\left( \diag(A_{\theta, \omega}^2 , B_{\theta, \omega}^2) \right) \enspace , \nonumber\\
	\beta &:= \max_{(\theta, \omega) \in B_2((\theta_*, \omega_*), r)}\lambda_{\max}\left(F_{\theta,\omega}  \right) \enspace , \\
	F_{\theta,\omega} &:= \nonumber \\
	 & \begin{bmatrix}
			A_{\theta, \omega}^2 + C_{\theta, \omega}C_{\theta, \omega}^T & -A_{\theta, \omega}C_{\theta, \omega}+C_{\theta, \omega}B_{\theta, \omega} \\
			-C_{\theta, \omega}^TA_{\theta, \omega}+B_{\theta, \omega}C^T_{\theta, \omega} & B_{\theta, \omega}^2 + C_{\theta, \omega}^TC_{\theta, \omega}
		\end{bmatrix} \enspace \nonumber
\end{align}
where $\lambda_{\max}(M), \lambda_{\min}(M)$ denote the largest and smallest eigenvalue of matrix $M$.

\begin{thm}[Exponential Convergence: SGA]
	\label{thm:stable-sga}
	Consider $U(\theta, \omega): \mathbb{R}^p \times \mathbb{R}^q \rightarrow \mathbb{R}$ that satisfies Assumption~\ref{asmp:convex-concave} for some radius $r>0$ near a stable local Nash equilibrium $(\theta_*, \omega_*)$ as in Definition~\ref{def:slne}. 
	Suppose the initialization satisfies $(\theta_0, \omega_0) \in B_2((\theta_*, \omega_*), r)$.  Then the SGA dynamics \eqref{e:sga} with fixed learning rate $$\eta = \sqrt{\alpha} / \beta \enspace ,$$ ($\alpha, \beta$ defined in Eqn.~\eqref{eq:alpha_beta})
	obtains an $\epsilon$-minimizer such that $(\theta_T, \omega_T) \in B_2((\theta_*, \omega_*), \epsilon)$, as long as 
	\begin{align*}
		T \geq T_{\rm SGA} := \left\lceil 2\frac{\beta}{\alpha} \log \frac{r}{\epsilon} \right\rceil \enspace .
	\end{align*}
\end{thm}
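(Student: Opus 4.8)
The plan is to reduce one SGA step to left-multiplication by $I-\eta M_t$ for a suitable \emph{averaged} Jacobian $M_t$, and then to exploit the block structure in \eqref{eq:abrev} to show that $I-\eta M_t$ contracts at the advertised rate. Write $z=(\theta,\omega)$, $z_\ast=(\theta_\ast,\omega_\ast)$, and introduce the saddle vector field $v(z):=(\nabla_\theta U(\theta,\omega),\,-\nabla_\omega U(\theta,\omega))$, so that SGA \eqref{e:sga} reads $z_{t+1}=z_t-\eta v(z_t)$ and part (1) of Definition~\ref{def:slne} says $v(z_\ast)=0$. The Jacobian $Dv(\theta,\omega)$ is exactly the matrix on the left of \eqref{eq:abrev}. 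Since $U$ is twice continuously differentiable, the integral form of the mean value theorem gives $v(z_t)=v(z_t)-v(z_\ast)=M_t(z_t-z_\ast)$ with $M_t:=\int_0^1 Dv(z_\ast+s(z_t-z_\ast))\,ds$, hence
\[
 z_{t+1}-z_\ast=(I-\eta M_t)(z_t-z_\ast).
\]
If $z_t\in B_2(z_\ast,r)$ then the whole segment from $z_\ast$ to $z_t$ lies in that convex ball, so every integrand obeys Assumption~\ref{asmp:convex-concave}, and $M_t$ inherits the block form of \eqref{eq:abrev} with block averages $\bar A,\bar B$ symmetric positive definite and off-diagonal block $\bar C$.

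The crucial observation is that the interaction term drops out of the symmetric part: $M_t+M_t^{T}=\diag(2\bar A,2\bar B)$, because the anti-symmetric interaction blocks $\bar C$ cancel. Therefore, writing $x:=z_t-z_\ast$,
\[
 \|(I-\eta M_t)x\|^2=\|x\|^2-\eta\,x^{T}(M_t+M_t^{T})x+\eta^2\|M_t x\|^2=\|x\|^2-2\eta\big(x_\theta^{T}\bar A x_\theta+x_\omega^{T}\bar B x_\omega\big)+\eta^2\|M_t x\|^2.
\]
For the middle term, averaging preserves the eigenvalue floor, so $\lambda_{\min}(\bar A),\lambda_{\min}(\bar B)\ge\sqrt\alpha$ (using $\lambda_{\min}(\diag(A^2,B^2))=\min\{\lambda_{\min}(A)^2,\lambda_{\min}(B)^2\}$), giving $x_\theta^{T}\bar A x_\theta+x_\omega^{T}\bar B x_\omega\ge\sqrt\alpha\|x\|^2$. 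For the last term, $\|M_t\|_{\mathrm{op}}\le\int_0^1\|Dv(\cdot)\|_{\mathrm{op}}\,ds\le\max_{B_2(z_\ast,r)}\|Dv\|_{\mathrm{op}}$, and at any point $\|Dv\|_{\mathrm{op}}^2=\lambda_{\max}(Dv^{T}Dv)=\lambda_{\max}(F_{\theta,\omega})$, since $F_{\theta,\omega}=J(Dv^{T}Dv)J$ for the orthogonal $J=\diag(I_p,-I_q)$; thus $\|M_t x\|^2\le\beta\|x\|^2$. Combining, $\|z_{t+1}-z_\ast\|^2\le(1-2\eta\sqrt\alpha+\eta^2\beta)\|z_t-z_\ast\|^2$.

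The parabola $\eta\mapsto1-2\eta\sqrt\alpha+\eta^2\beta$ is minimized at $\eta=\sqrt\alpha/\beta$ with value $1-\alpha/\beta$, which is automatically in $[0,1)$ (the quadratic form on the left is nonnegative, and $\alpha,\beta>0$ on the compact closed ball). Hence with this $\eta$ the iterates contract, $\|z_{t+1}-z_\ast\|\le\|z_t-z_\ast\|$; starting from $z_0\in B_2(z_\ast,r)$ this keeps every iterate in the ball, so the recursion remains valid for all $t$ by induction and $\|z_T-z_\ast\|^2\le(1-\alpha/\beta)^T r^2$. Requiring the right side to be at most $\epsilon^2$ and using $\log\frac1{1-u}\ge u$ on $[0,1)$ shows that $T\ge 2\tfrac\beta\alpha\log\tfrac r\epsilon\ge \tfrac{2\log(r/\epsilon)}{\log\frac1{1-\alpha/\beta}}$ suffices, i.e. $T\ge T_{\mathrm{SGA}}$.

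I do not expect a deep obstacle here; the real content is twofold. First, one must notice that the symmetric part of the averaged Jacobian is block-diagonal, so the interaction term $C$ never enters the first-order-in-$\eta$ decrease and affects the rate only through the $CC^{T}$ and $C^{T}C$ contributions to $\beta$ (this is the "curse" quantification). Second, some care is needed to pass from the pointwise convexity-concavity hypothesis to a statement about the averaged operator $M_t$ while certifying that the iterates never leave $B_2(z_\ast,r)$ — the monotone contraction handles this, but it must be set up as an induction. The optimal step size $\eta=\sqrt\alpha/\beta$ and the contraction factor $1-\alpha/\beta$ then fall out of a one-line scalar optimization.
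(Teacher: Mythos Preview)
Your argument is correct and is essentially the paper's own proof: the same integral mean-value expansion, the same algebraic cancellation $M+M^{T}=\diag(2A,2B)$ (equivalently, the paper's identity $(I-\eta M)(I-\eta M)^{T}=I-2\eta\diag(A,B)+\eta^{2}F$), and the same scalar optimization in $\eta$ leading to the contraction factor $1-\alpha/\beta$. The only cosmetic difference is that the paper bounds $\|I-\eta Dv\|_{\rm op}$ pointwise on the segment and then invokes convexity of the operator norm, whereas you average the Jacobian first and bound $\|I-\eta M_t\|_{\rm op}$ directly; both routes yield the identical inequality.
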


\begin{remark}\label{remark:GD}
	\rm 
	It is interesting to compare the convergence speed of the saddle point dynamics to conventional gradient descent in one variable, for a strongly-convex function. We remind the reader that to obtain an $\epsilon$-minimizer for a strongly-convex function, one needs the following number of iterations of gradient descent,
		\begin{align*}
			T_{\rm GD} := \max\left\{ \frac{\lambda_{\max} \left(A_{\theta, \omega} \right)}{\lambda_{\min} \left(A_{\theta, \omega} \right)} \log \frac{r}{\epsilon}, ~~ \frac{\lambda_{\max} \left(B_{\theta, \omega} \right)}{\lambda_{\min} \left(B_{\theta, \omega} \right)} \log \frac{r}{\epsilon} \right\} \enspace ,
		\end{align*}
		depending on whether we are optimizing with respect to $\theta$ or $\omega$, respectively.
	It is now evident that due to the presence of $C_{\theta, \omega}C_{\theta, \omega}^T$, the convergence of two-player SGA to a saddle-point can be significantly slower than convergence of single-player gradient descent. In particular, applying the eigenvalue interlacing theorem to the principal submatrix $A_{\theta, \omega}^2 + C_{\theta, \omega}C_{\theta, \omega}^T$ of $F_{\theta,\omega}$ we obtain
		\begin{align*}
		\lambda_{\max}\left( F_{\theta,\omega} \right) & \geq \lambda_{\max}\left( A_{\theta, \omega}^2 + C_{\theta, \omega}C_{\theta, \omega}^T \right) \enspace , \\
		& \geq  \lambda_{\max}\left( A_{\theta, \omega}^2  \right) \enspace .
		\end{align*} Therefore the convergence of SGA is slower than that in the conventional GD\footnote{Recall that $\frac{\lambda_{\max}(A_{\theta,\omega}^2)}{\lambda_{\min}(A_{\theta,\omega}^2)} \geq \frac{\lambda_{\max}(A_{\theta,\omega})}{\lambda_{\min}(A_{\theta,\omega})}$.}
		\begin{align*}
			T_{\rm SGA} \geq T_{\rm GD}\enspace.
		\end{align*}
		We would like to emphasize that for the saddle point convergence, the slow-down effect of the interaction term $C_{\theta, \omega}$ is explicit in our non-asymptotic analysis. 
		
		The intuition that the discrete-time SGA dynamics cycles inward to a stable Nash equilibrium exponentially fast can be seen in the following way. The presence of the off-diagonal anti-symmetric component in Eqn.~\eqref{eq:abrev} means that the associated linear operator of the discrete-time dynamics has complex eigenvalues, which results in periodic cycling behavior. However, due to the explicit choice of $\eta$, the distance to stable Nash equilibrium is shrinking exponentially fast. The local exponential stability in the infinitesimal/asymptotic case when $\eta \rightarrow 0$ has already been studied in a paper \cite{nagarajan2017gradient} (Theorem 3.1 therein) by showing the Jacobian matrix of a particular form of GAN objective is Hurwitz (has all strictly negative eigenvalues). There are two distinct differences in our result: (1) we provide non-asymptotic convergence, with specific guidance on the choice of learning rate $\eta$; (2) our analysis goes through analyzing the singular values (which is rather different from the modulus of eigenvalue for a general matrix), instead of involving the complex eigenvalues, and this simple technique generalizes to four other modified saddle point dynamics which we discuss in the next section.  
\end{remark}

In fact, one can show that the slow-down effect of the interaction term $C_{\theta, \omega}$ for SGA in the above theorem is indeed necessary. 
\begin{coro}[Simple Lower Bound for SGA]
	\label{cor:lower-bound}
	Consider $U(\theta, \omega) = \frac{1}{2} \theta^T \theta - \frac{1}{2} \omega^T \omega + \theta^T C \omega$ with $p=q$ and $C \in \mathbb{R}^{p \times q}$ full rank. Then the SGA dynamics \eqref{e:sga} with any fixed learning rate $\eta$ satisfies
	\begin{align*}
		\| \theta_{t+1} \|^2 +  \| \omega_{t+1} \|^2 \geq \frac{\lambda_{\min}(C^T C)}{1+ \lambda_{\min}(C^T C)} \left( \| (\theta_{t} \|^2 +  \| \omega_{t} \|^2 \right).
	\end{align*}
\end{coro}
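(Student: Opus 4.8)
The plan is to exploit the fact that for this quadratic value function the SGA map is linear, so the squared-norm recursion can be written down exactly. With $\nabla_\theta U(\theta,\omega) = \theta + C\omega$ and $\nabla_\omega U(\theta,\omega) = -\omega + C^T\theta$, the update \eqref{e:sga} reads $\theta_{t+1} = (1-\eta)\theta_t - \eta C\omega_t$ and $\omega_{t+1} = (1-\eta)\omega_t + \eta C^T\theta_t$. First I would expand $\|\theta_{t+1}\|^2 + \|\omega_{t+1}\|^2$ directly. The key observation is that the cross term $-2\eta(1-\eta)\theta_t^T C\omega_t$ coming from $\|\theta_{t+1}\|^2$ and the cross term $+2\eta(1-\eta)\omega_t^T C^T\theta_t = +2\eta(1-\eta)\theta_t^T C\omega_t$ coming from $\|\omega_{t+1}\|^2$ cancel exactly — this cancellation is precisely the manifestation of the antisymmetry of the off-diagonal block in \eqref{eq:abrev}. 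What remains is the exact identity
\begin{align*}
\|\theta_{t+1}\|^2 + \|\omega_{t+1}\|^2 = (1-\eta)^2\bigl(\|\theta_t\|^2 + \|\omega_t\|^2\bigr) + \eta^2\bigl(\|C^T\theta_t\|^2 + \|C\omega_t\|^2\bigr) \enspace .
\end{align*}

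Next I would lower-bound the interaction contribution by a Rayleigh-quotient estimate: $\|C\omega_t\|^2 = \omega_t^T C^TC\,\omega_t \geq \lambda_{\min}(C^TC)\|\omega_t\|^2$ and $\|C^T\theta_t\|^2 = \theta_t^T CC^T\theta_t \geq \lambda_{\min}(CC^T)\|\theta_t\|^2$, together with the fact that for square full-rank $C$ the matrices $CC^T$ and $C^TC$ have the same spectrum, so $\lambda_{\min}(CC^T) = \lambda_{\min}(C^TC) =: \lambda$. This gives
\begin{align*}
\|\theta_{t+1}\|^2 + \|\omega_{t+1}\|^2 \geq \bigl[(1-\eta)^2 + \lambda \eta^2\bigr]\bigl(\|\theta_t\|^2 + \|\omega_t\|^2\bigr) \enspace .
\end{align*}

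The final step is a one-variable minimization over the learning rate: $g(\eta) := (1-\eta)^2 + \lambda\eta^2$ is a convex quadratic minimized at $\eta_\star = 1/(1+\lambda)$, with $g(\eta_\star) = \lambda/(1+\lambda)$; hence $g(\eta) \geq \lambda/(1+\lambda)$ for every $\eta \in \mathbb{R}$, which is exactly the claimed bound with $\lambda = \lambda_{\min}(C^TC)$. There is no real obstacle — the entire content is the cancellation of the cross terms — but the point worth stressing is that the estimate is an infimum over all fixed $\eta$, so it shows the per-step contraction factor can never be smaller than $\lambda_{\min}(C^TC)/(1+\lambda_{\min}(C^TC))$, making the interaction-term slow-down identified in Theorem~\ref{thm:stable-sga} genuinely unavoidable for SGA.
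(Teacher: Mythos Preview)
Your proof is correct and follows essentially the same route as the paper: the paper packages the computation as $K^TK=D$ with $D=\diag\bigl((1-\eta)^2I+\eta^2CC^T,\,(1-\eta)^2I+\eta^2C^TC\bigr)$ and then applies the Rayleigh-quotient lower bound, while you carry out the equivalent scalar expansion and make the cross-term cancellation (i.e., the block-diagonality of $K^TK$) explicit. The subsequent minimization over $\eta$ is identical.
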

The above corollary shows that for any stepsize $\eta$, to obtain $\epsilon$-solution, the number of SGA iteration is at least $\Omega((1+\lambda_{\min}(C^T C)) \log (1/\epsilon))$. Namely, the interaction term is indeed a curse to the convergence rate. 

\section{Unstable Case: Local Bi-Linear Problem}\label{sec:bilinear}

Oscillation and instability for SGA occurs when the problem is non-strongly convex-concave, as in the bi-linear game (or more precisely, at least linear in one player). This observation was first pointed out using a very simple linear game $U(x, y) = xy$ in \cite{salimans2016improved}.
More generally, as a result of Theorem~\ref{thm:stable-sga}, this phenomenon occurs when the local Nash equilibrium is non-stable, 
\begin{align*}
			&  \diag(
			A_{\theta_*, \omega_*} , B_{\theta_*, \omega_*}) \approx \mathbf{0} \iff \\
			& \begin{bmatrix}
			A_{\theta_*, \omega_*} & C_{\theta_*, \omega_*} \\
			-C_{\theta_*, \omega_*}^T & B_{\theta_*, \omega_*}
		\end{bmatrix} \approx \begin{bmatrix}
			\mathbf{0} & C_{\theta_*, \omega_*} \\
			-C_{\theta_*, \omega_*}^T & \mathbf{0}
		\end{bmatrix}\enspace.
\end{align*}

Let's consider an extreme case when 
$A_{\theta_*, \omega_*} = \mathbf{0}$ and $B_{\theta_*, \omega_*} = \mathbf{0}$.
In this case, we will use a novel unified non-asymptotic analysis to show that the following proposed dynamics can fix the oscillation problem and provide exponential convergence to unstable Nash equilibria:
\begin{itemize}
	\item[(1)] \textit{Optimistic Mirror Descent} (OMD) in \cite{daskalakis2017training}
	\item[(2)] A modified version of \textit{Predictive Methods} (PM) motivated from \cite{yadav2017stabilizing}
	\item[(3)] \textit{Implicit Updates}
	\item[(4)] \textit{Consensus Optimization} (CO) introduced in \cite{mescheder2017numerics}
\end{itemize}
Our analysis shows that these stabilizing techniques, at a high level, all manipulate the dynamics to utilize the curvature generated by the interaction term $C_{\theta_*, \omega_*} C_{\theta_*, \omega_*}^T$\,---\, which we refer to as the ``blessing'' of the interaction term, to contrast with the ``slow-down effect'' of the interaction term in the strongly convex-concave case (Theorem \ref{thm:stable-sga}). Once again, as alluded to in the introduction, this fast linear-rate convergence result in the non-strongly convex-concave two-player game should be contrasted with the significantly slower sub-linear convergence rate for all first-order-methods in convex but non-strongly convex single-player optimization. The latter was proved by a lower bound argument in \cite[Theorem 2.1.7]{nesterov2013introductory}. The main result proved in this section is informally stated
\begin{thm}[Informal: Unstable Case]
	All these four modified dynamics, in the bi-linear game, enjoy the last iterate exponential convergence guarantee. 
\end{thm}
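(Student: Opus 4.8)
After translating the equilibrium to the origin, the hypothesis $A_{\theta_*,\omega_*}=B_{\theta_*,\omega_*}=\mathbf{0}$ makes the local game exactly the bilinear one $U(\theta,\omega)=\theta^{T}C\omega$ with $C:=C_{\theta_*,\omega_*}$; assume $C$ has full rank as in Corollary~\ref{cor:lower-bound} (otherwise restrict to $(\ker C^{T})^{\perp}\times(\ker C)^{\perp}$, on whose complement the dynamics are frozen and which lies in the Nash set). Because $\nabla_{\theta}U=C\omega$ and $\nabla_{\omega}U=C^{T}\theta$ are linear, each of OMD, PM, IU, CO becomes an explicit \emph{linear} recursion: first order for IU and CO, and second order for OMD and PM (these reference the previous iterate), which I lift to the augmented state $s_t=(\theta_t,\omega_t,\theta_{t-1},\omega_{t-1})$ so that $s_{t+1}=G_\eta s_t$. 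The plan is to show that $G_\eta$ has spectral radius strictly below $1$ and, moreover, that its iterates contract geometrically in a fixed norm, with the contraction factor and the admissible learning rate read off explicitly.

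The unifying device is the singular value decomposition $C=U\Sigma V^{T}$: the orthogonal change of variables $\tilde\theta=U^{T}\theta$, $\tilde\omega=V^{T}\omega$ commutes with all four updates and block-diagonalizes $G_\eta$ into scalar ($2\times2$, or $4\times4$ in the second-order case) blocks indexed by the singular values $\sigma_i$ of $C$. On block $i$ the eigenvalues are available in closed form: for IU the block is $(1+\eta^{2}\sigma_i^{2})^{-1}$ times a rotation, with modulus $(1+\eta^{2}\sigma_i^{2})^{-1/2}<1$ for \emph{every} $\eta>0$; for CO the block equals $(1-\eta\gamma\sigma_i^{2})I+\eta\sigma_i J$ with $J$ the $2\times 2$ rotation generator, eigenvalues $(1-\eta\gamma\sigma_i^{2})\pm i\eta\sigma_i$, of squared modulus $1-2\eta\gamma\sigma_i^{2}+\eta^{2}(\gamma^{2}\sigma_i^{4}+\sigma_i^{2})$, which is below $1$ once $\eta<2\gamma/(1+\gamma^{2}\sigma_{\max}^{2})$; and for OMD (analogously PM) the substitution $z_t:=\tilde\theta_t+i\tilde\omega_t$ yields the scalar recursion $z_{t+1}=(1+2i\eta\sigma_i)z_t-i\eta\sigma_i z_{t-1}$ whose characteristic roots are $\tfrac12\bigl(1\pm\sqrt{1-4\eta^{2}\sigma_i^{2}}\bigr)+i\eta\sigma_i$, both of modulus below $1$ for $\eta\sigma_{\max}$ small enough (e.g.\ $\eta<1/(2\sigma_{\max})$). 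In every case the quantity that drives the spectral radius below $1$ is the interaction curvature $\sigma_i^{2}$, i.e.\ an eigenvalue of $CC^{T}$ — entering through the implicit resolvent $(I+\eta^{2}CC^{T})^{-1}$ for IU, the explicit penalty for CO, and the extrapolation term for OMD/PM — which is the ``blessing of the interaction term.'' The overall rate is governed by the worst block $\sigma_{\min}$, and optimizing $\eta$ (and $\gamma$) gives an explicit $\rho=\rho(\sigma_{\min},\sigma_{\max})<1$, hence $T=O\bigl(\log(1/\epsilon)/\log(1/\rho)\bigr)$.

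It remains to turn ``spectral radius $<1$'' into a genuine last-iterate bound $\|(\theta_t,\omega_t)-(\theta_*,\omega_*)\|\le M\rho^{t}\,\|(\theta_0,\omega_0)-(\theta_*,\omega_*)\|$. For IU and CO each block is a \emph{normal} matrix (a scaled rotation), so the Euclidean operator norm equals the spectral radius and $M=1$; summing over blocks and undoing the orthogonal change of variables finishes these two cases cleanly. For the second-order recursions the companion-form block is non-normal, and the spectral bound alone leaves a polynomial-in-$t$ prefactor; I will instead construct, on each $2$-dimensional block, a quadratic Lyapunov function $a|z_t|^{2}+b|z_{t-1}|^{2}+c\,\mathrm{Re}(\bar z_t z_{t-1})$ — equivalently, solve the block Lyapunov inequality $G^{*}PG\preceq\rho^{2}P$ with $P$ depending continuously on $\sigma_i\in[\sigma_{\min},\sigma_{\max}]$ and hence uniformly conditioned — and sum. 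I expect this uniform Lyapunov construction for OMD/PM (a single $P$-family and a single rate valid across all singular values) to be the main obstacle; the SVD reduction and the closed-form eigenvalue computations above are routine, and the learning-rate prescriptions for each method fall out of the per-block conditions.
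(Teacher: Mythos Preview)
Your approach is correct, with one slip: PM (Eqn.~\eqref{eq:pm}) is \emph{first}-order---the half-step $(\theta_{t+1/2},\omega_{t+1/2})$ depends only on $(\theta_t,\omega_t)$---and in the bilinear game the paper shows PM and CO produce the \emph{identical} linear map, so PM needs no second-order treatment. For IU and CO your SVD reduction is equivalent to the paper's direct computation that $KK^T$ is block-diagonal. The genuine divergence is OMD: you correctly compute the scalar characteristic roots $\lambda_\pm=\tfrac12\bigl(1\pm\sqrt{1-4\eta^2\sigma^2}\bigr)+i\eta\sigma$ with $|\lambda_\pm|^2=\tfrac12\bigl(1\pm\sqrt{1-4\eta^2\sigma^2}\bigr)$, and then propose a uniform block-Lyapunov construction to control the non-normal companion matrix. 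The paper instead lifts these same roots to commuting matrices $R_1,R_2$ (chosen so that $R_1+R_2$ and $R_1R_2=R_2R_1$ match the two matrix coefficients of the recursion), writes the solution in closed form as $(R_1-R_2)(\theta_t,\omega_t)^T=R_1^t v_1-R_2^t v_2$, and observes that each $R_iR_i^T$ is again block-diagonal---hence $\|R_i\|_{\mathrm{op}}=\max_j|\lambda_\pm(\sigma_j)|$ exactly and $\|R_i^t\|\le\|R_i\|_{\mathrm{op}}^t$ with no transient prefactor. Equivalently at your scalar level: having the roots in hand, you can skip the Lyapunov function and simply write $z_t=a\lambda_+^t+b\lambda_-^t$, bounding $|a|,|b|$ through the Vandermonde gap $|\lambda_+-\lambda_-|=\sqrt{1-4\eta^2\sigma^2}$, which is uniformly positive over $\sigma\in[\sigma_{\min},\sigma_{\max}]$ once $\eta<1/(2\sigma_{\max})$. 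Your Lyapunov route would work, but the explicit-solution route sidesteps the uniform-$P$ obstacle you flag as the main difficulty and delivers the constants directly.
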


The bilinear game can be motivated by considering the Taylor expansion of a general smooth two-player game around a non-stable Nash equilibrium ($A, B \approx \mathbf{0}$), 
assuming that $(\theta_\ast, \omega_\ast) = (\mathbf{0},\mathbf{0})$. Now consider the simple bi-linear game
$U(\theta,  \omega) = \theta^T C \omega$.
With the SGA dynamics defined in \eqref{e:sga}, one can easily verify that 
\begin{align*}
	&\| \theta_{t+1} \|^2 \geq \left(1 + \eta^2 \lambda_{\min} (CC^T) \right) \| \theta_{t} \|^2 \enspace , \\
	& \| \omega_{t+1} \|^2 \geq \left(1 + \eta^2 \lambda_{\min} (C^TC) \right) \| \omega_{t} \|^2 \enspace.
\end{align*}
Therefore, the continuous limit $\eta \rightarrow 0$ is cycling around a sphere, while with any practical learning rate $\eta \neq 0$, the distance to the Nash equilibrium can be increasing exponentially instead of converging. 
Per Theorem~\ref{thm:stable-sga} and the discussion above, instability for SGA only occurs when the local game is approximately bi-linear.
From now on, therefore, we will focus on the simplest unstable form of the game, the bi-linear game, to isolate the main idea behind fixing the instability problem. The proof technique can be extended to more general settings, with a sacrifice of simplicity.

\subsection{(Improved) Optimistic Mirror Descent}

\citet*{daskalakis2017training} employed Optimistic Mirror Descent (OMD) \citep{rakhlin2013optimization} motivated by online learning to solve the instability problem in GANs. Here we provide a stronger result, showing that the last iterate of OMD enjoys \textit{exponential convergence} for bi-linear games. We note that although the last-iterate convergence of this OMD procedure was already rigorously proved in \cite{daskalakis2017training}, the exponential convergence is not known to the best of our knowledge. 
	
\begin{thm}[Exponential Convergence: OMD]
	\label{lem:unstable-omd}
	Consider a bi-linear game $U(\theta,  \omega) = \theta^T C \omega.$ Assume $p = q$ and $C$ is full rank. Then the OMD dynamics,
	\begin{align}
		\label{eq:omd}
		\theta_{t+1} &= \theta_{t} - 2\eta \nabla_{\theta} U(\theta_{t}, \omega_{t}) + \eta \nabla_{\theta} U(\theta_{t-1}, \omega_{t-1})\enspace, \nonumber \\
		\omega_{t+1} &= \omega_{t} + 2\eta \nabla_{\omega} U(\theta_{t}, \omega_{t}) - \eta \nabla_{\omega} U(\theta_{t-1}, \omega_{t-1})\enspace,
	\end{align}  
	with the learning rate
	\begin{align*}
		\eta = \frac{1}{2\sqrt{2\lambda_{\max}(CC^T)}}\enspace,
	\end{align*}
	obtains an $\epsilon$-minimizer such that $(\theta_T, \omega_T) \in B_2(\epsilon)$, provided
	\begin{align*}
		T \geq T_{\rm OMD} := \left\lceil 16\frac{\lambda_{\max}(CC^T)}{\lambda_{\min}(CC^T)} \log \frac{4\sqrt{2}r}{\epsilon} \right\rceil \enspace,
	\end{align*}
	under the assumption that $\| (\theta_0, \omega_0) \|, \|(\theta_1, \omega_1) \|\leq r$.
\end{thm}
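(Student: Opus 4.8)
The plan is to diagonalize the dynamics mode by mode via a singular value decomposition and thereby reduce the coupled second-order vector recursion to a \emph{scalar} second-order complex recursion whose two roots, and in particular their moduli, can be written down explicitly.

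First I would take an SVD $C = U\Sigma V^T$ with $\Sigma = \diag(\sigma_1,\dots,\sigma_p)$, and pass to the rotated coordinates $\tilde\theta_t = U^T\theta_t$, $\tilde\omega_t = V^T\omega_t$. Since $\nabla_\theta U(\theta,\omega) = C\omega$ and $\nabla_\omega U(\theta,\omega) = C^T\theta$, the OMD update \eqref{eq:omd} decouples into $p$ independent scalar games indexed by $i$,
\begin{align*}
  \tilde\theta_{t+1}^{(i)} &= \tilde\theta_t^{(i)} - 2\eta\sigma_i\,\tilde\omega_t^{(i)} + \eta\sigma_i\,\tilde\omega_{t-1}^{(i)}\enspace,\\
  \tilde\omega_{t+1}^{(i)} &= \tilde\omega_t^{(i)} + 2\eta\sigma_i\,\tilde\theta_t^{(i)} - \eta\sigma_i\,\tilde\theta_{t-1}^{(i)}\enspace.
\end{align*}
Setting $w_t^{(i)} := \tilde\theta_t^{(i)} + \mathrm{i}\,\tilde\omega_t^{(i)}\in\mathbb{C}$ (with $\mathrm{i}$ the imaginary unit) and $\mu_i := \eta\sigma_i$, these two real recursions collapse into the single complex recursion $w_{t+1}^{(i)} = (1 + 2\mathrm{i}\mu_i)\,w_t^{(i)} - \mathrm{i}\mu_i\,w_{t-1}^{(i)}$; and by orthogonality of $U,V$ one has $\|\theta_t\|^2 + \|\omega_t\|^2 = \sum_i |w_t^{(i)}|^2$, so it suffices to control each $|w_t^{(i)}|$.

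Next I would analyze the characteristic equation $\lambda^2 - (1 + 2\mathrm{i}\mu)\lambda + \mathrm{i}\mu = 0$. A direct computation gives discriminant $(1+2\mathrm{i}\mu)^2 - 4\mathrm{i}\mu = 1 - 4\mu^2$, hence roots $\lambda_\pm = \tfrac12\bigl(1 + 2\mathrm{i}\mu \pm \sqrt{1-4\mu^2}\,\bigr)$, and a second short computation yields the clean identities $|\lambda_+|^2 = \tfrac12\bigl(1+\sqrt{1-4\mu^2}\bigr)$ and $|\lambda_-|^2 = \tfrac12\bigl(1-\sqrt{1-4\mu^2}\bigr)$. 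The chosen learning rate $\eta = \bigl(2\sqrt{2\lambda_{\max}(CC^T)}\bigr)^{-1}$ forces $\mu_i \le 1/(2\sqrt2) < 1/2$ for all $i$, so the discriminant is real and positive, the two roots are distinct (ruling out the spurious factor of $t$ that a repeated root would introduce), their separation satisfies $|\lambda_+ - \lambda_-| = \sqrt{1-4\mu^2} \ge 1/\sqrt2$, and both moduli are $\le 1$. The slowest rate is $\rho := \max_i |\lambda_+^{(i)}|$, attained at the smallest singular value, with $\rho^2 = \tfrac12\bigl(1 + \sqrt{1 - 4\eta^2\lambda_{\min}(CC^T)}\bigr)$.

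Finally I would write the closed form $w_t^{(i)} = a_i\lambda_+^t + b_i\lambda_-^t$, solve for $a_i,b_i$ from $w_0^{(i)},w_1^{(i)}$ via the Vandermonde inverse, and use the separation $|\lambda_+-\lambda_-|\ge 1/\sqrt2$ together with $|\lambda_\pm|\le 1$ to get $|a_i| + |b_i| \le 2\sqrt2\,(|w_0^{(i)}| + |w_1^{(i)}|)$, hence $|w_t^{(i)}|^2 \le 16\bigl(|w_0^{(i)}|^2+|w_1^{(i)}|^2\bigr)\rho^{2t}$. Summing over $i$ and using $\|(\theta_0,\omega_0)\|,\|(\theta_1,\omega_1)\|\le r$ gives $\|(\theta_t,\omega_t)\| \le 4\sqrt2\,r\,\rho^t$. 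To convert this into an iteration count, the elementary bounds $1-\sqrt{1-x}\ge x/2$ and $\log(1-x)\le -x$ applied to $\rho^2$ yield $\log(1/\rho) \ge \tfrac12\eta^2\lambda_{\min}(CC^T) = \lambda_{\min}(CC^T)/\bigl(16\,\lambda_{\max}(CC^T)\bigr)$, and solving $4\sqrt2\,r\,\rho^T \le \epsilon$ gives precisely $T \ge T_{\rm OMD}$. I expect the main obstacle to be the constant-tracking in this last step: choosing the (deliberately non-tight) constants in the coefficient bound, the mode summation, and the logarithmic estimate so that they combine into exactly $16\,\lambda_{\max}(CC^T)/\lambda_{\min}(CC^T)$ and $4\sqrt2\,r$. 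The conceptual core — the SVD decoupling plus the complex-scalarization that collapses a coupled second-order vector recursion to a scalar one with explicitly computable root moduli — is the crux, and once it is in place the remainder is calculation.
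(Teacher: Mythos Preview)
Your proposal is correct and arrives at exactly the same final constants as the paper, but the route is genuinely different. The paper never diagonalizes: it factors the second-order \emph{matrix} recursion by constructing two matrix ``roots'' $R_1,R_2$ with $R_1+R_2 = I - 2\eta M$ and $R_1R_2 = R_2R_1 = -\eta M$ (where $M$ is the antisymmetric block $\begin{bsmallmatrix}0 & C\\ -C^T & 0\end{bsmallmatrix}$), then computes $R_jR_j^T$ by direct block-matrix algebra to obtain the operator-norm bounds $\|R_1\|_{\rm op}^2 \le \tfrac12\bigl(1+\sqrt{1-4\eta^2\lambda_{\min}(CC^T)}\bigr)$ and $\|R_2\|_{\rm op}^2 \le \tfrac12$, together with $\|(R_1-R_2)^{-1}\|_{\rm op}\le\sqrt2$. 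Your SVD-then-complex-scalarize approach rediscovers precisely these quantities as the squared moduli $|\lambda_\pm|^2 = \tfrac12(1\pm\sqrt{1-4\mu^2})$ of the scalar characteristic roots and the separation $|\lambda_+-\lambda_-|\ge 1/\sqrt2$; the two computations are dual. What you gain is transparency and elementarity\,---\,no matrix square roots to define, no commutativity lemma to verify (the paper needs an SVD argument just to check $R_1R_2=R_2R_1$), and the complex variable $w_t = \tilde\theta_t + \mathrm{i}\tilde\omega_t$ makes the rotational structure of the bilinear game visible. What the paper's coordinate-free factorization buys is a template that would extend more directly to settings where simultaneous diagonalization is unavailable (e.g., non-normal perturbations of the bilinear case). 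Your constant-tracking in the last paragraph is accurate: the chain $\rho^2 \le 1 - \eta^2\lambda_{\min}(CC^T)$, $\log(1/\rho)\ge \tfrac12\eta^2\lambda_{\min}(CC^T) = \lambda_{\min}/(16\lambda_{\max})$, together with $\|(\theta_t,\omega_t)\|\le 4\sqrt2\,r\,\rho^t$, reproduces $T_{\rm OMD}$ exactly.
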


Let us compare our result with the last-iterate convergence result in \cite{daskalakis2017training}. Roughly speaking, \cite[Theorem 1]{daskalakis2017training} asserts that to obtain an $\epsilon$-minimizer, one requires a learning rate scaling as $\eta(\epsilon) \asymp \epsilon^{2}$ and a number of iterations bounded by $$T \succsim \epsilon^{-4} \log \frac{1}{\epsilon} \cdot  {\rm Poly}\left(\frac{\lambda_{\max}(CC^T)}{\lambda_{\min}(CC^T)}\right) \enspace .$$
In contrast, we show that with step size $\eta$ chosen independently of $\epsilon$, the last iterate of OMD falls within $\epsilon$ of the saddle point after a number of iterations given by $$T \succsim \log \frac{1}{\epsilon} \cdot \frac{\lambda_{\max}(CC^T)}{\lambda_{\min}(CC^T)} \enspace .$$ In other words, we improved the dependence on $\epsilon$ from polynomial to logarithmic. This improved analysis also coincides with the exponential convergence found in simulations.

\subsection{(Modified) Predictive Methods}

From a very different motivation in ordinary differential equations, \cite{yadav2017stabilizing} proposed Predictive Methods (PM) to fix the instability problem. The intuition is to evaluate the gradient at a predictive future location and then perform the update. In this section, we propose and analyze a modified version of the predictive method (for simultaneous gradient updates), inspired by \cite{yadav2017stabilizing}.

Consider the following modified PM dynamics,
\begin{align}
	\label{eq:pm}
	\text{predictive step:} \quad \theta_{t+1/2} &= \theta_t - \gamma \nabla_{\theta} U(\theta_t, \omega_t) \enspace, \nonumber \\
	 \quad \omega_{t+1/2} &= \omega_t + \gamma \nabla_{\omega} U(\theta_t, \omega_t) \enspace; \nonumber \\
	\text{gradient step:} \quad \theta_{t+1} &= \theta_{t} - \eta \nabla_{\theta} U(\theta_{t+1/2}, \omega_{t+1/2}) \enspace, \nonumber \\
	 \quad \omega_{t+1} &= \omega_{t} + \eta \nabla_{\omega} U(\theta_{t+1/2}, \omega_{t+1/2}) \enspace.
\end{align}

\begin{thm}[Exponential Convergence: PM]
	\label{lem:unstable-pm}
	Consider a bi-linear game $U(\theta,  \omega) = \theta^T C \omega.$ Assume $p = q$ and $C$ is full rank. Fix some $\gamma > 0$. Then the PM dynamics in Eqn.~\eqref{eq:pm} 
	with learning rate
	\begin{align*}
		\eta = \frac{\gamma \lambda_{\min}(CC^T)}{\lambda_{\max}(CC^T) + \gamma^2 \lambda^2_{\max}(CC^T)} \enspace, 
	\end{align*}
	obtains an $\epsilon$-minimizer such that $(\theta_T, \omega_T) \in B_2(\epsilon)$, provided
	\begin{align*}
		T \geq T_{\rm PM} := \left\lceil 2\frac{\gamma^2 \lambda^2_{\max}(CC^T) + \lambda_{\max}(CC^T)}{\gamma^2 \lambda^2_{\min}(CC^T)}  \log \frac{r}{\epsilon} \right\rceil \enspace,
	\end{align*}
	under the assumption that $\| (\theta_0, \omega_0) \|\leq r$.
\end{thm}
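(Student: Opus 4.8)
The plan is to reduce the PM update to a single linear recursion on the concatenated iterate $z_t = (\theta_t, \omega_t)$ and then track the squared distance $\|\theta_t\|^2 + \|\omega_t\|^2$ directly, exhibiting a per-step multiplicative contraction. First I would substitute the predictive half-step into the gradient step. Since $\nabla_\theta U(\theta,\omega) = C\omega$ and $\nabla_\omega U(\theta,\omega) = C^T\theta$, the predictive step reads $\theta_{t+1/2} = \theta_t - \gamma C\omega_t$, $\omega_{t+1/2} = \omega_t + \gamma C^T\theta_t$, so the full update becomes $\theta_{t+1} = (I - \eta\gamma CC^T)\theta_t - \eta C\omega_t$ and $\omega_{t+1} = \eta C^T\theta_t + (I - \eta\gamma C^TC)\omega_t$, i.e. $z_{t+1} = G z_t$ with $G = \begin{bmatrix} I - \eta\gamma CC^T & -\eta C \\ \eta C^T & I - \eta\gamma C^TC \end{bmatrix}$; the claim is then equivalent to $\lambda_{\max}(G^TG) \le 1-\delta$ for an appropriate $\delta>0$.

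The key step is to expand $\|\theta_{t+1}\|^2 + \|\omega_{t+1}\|^2$ and use the cancellations forced by the anti-symmetric off-diagonal structure: the linear-in-$C$ cross terms $\mp 2\eta\,\theta_t^T C\omega_t$ cancel (this is precisely the cancellation that makes plain SGA merely cycle), and the cubic cross terms $\pm 2\eta^2\gamma\,\theta_t^T C C^T C\omega_t$ cancel too, both because a scalar equals its own transpose. What survives is a sum of two decoupled quadratic forms, $\|\theta_{t+1}\|^2 + \|\omega_{t+1}\|^2 = \|\theta_t\|^2 + \|\omega_t\|^2 + (\eta^2 - 2\eta\gamma)\,\theta_t^T CC^T\theta_t + \eta^2\gamma^2\,\theta_t^T(CC^T)^2\theta_t + (\eta^2 - 2\eta\gamma)\,\omega_t^T C^TC\omega_t + \eta^2\gamma^2\,\omega_t^T(C^TC)^2\omega_t$. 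The predictive half-step is exactly what injects the stabilizing negative-curvature term $-2\eta\gamma\,\theta^T CC^T\theta$ — the ``blessing'' of the interaction term.

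Next I would bound each block. Writing $\mu = \lambda_{\min}(CC^T)$ and $L = \lambda_{\max}(CC^T)$ (the spectra of $CC^T$ and $C^TC$ coincide since $p=q$ and $C$ is full rank, so the same bounds govern the $\omega$-block), and using $\theta^T(CC^T)^2\theta \le L\,\theta^T CC^T\theta$ together with $\theta^T CC^T\theta \ge \mu\|\theta\|^2$, the $\theta$-block is at most $-\bigl(2\eta\gamma - \eta^2(1+\gamma^2 L)\bigr)\mu\|\theta_t\|^2$, provided $2\eta\gamma - \eta^2(1+\gamma^2L) \ge 0$; the $\omega$-block obeys the identical bound. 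Plugging in $\eta = \gamma\mu/(L + \gamma^2L^2)$ gives $\eta \le \gamma/(1+\gamma^2L)$, so the nonnegativity holds, and $2\eta\gamma - \eta^2(1+\gamma^2L) = \tfrac{\gamma^2\mu}{L(1+\gamma^2L)}\bigl(2 - \mu/L\bigr) \ge \tfrac{\gamma^2\mu}{L(1+\gamma^2L)}$, which yields the contraction $\|\theta_{t+1}\|^2 + \|\omega_{t+1}\|^2 \le \bigl(1 - \tfrac{\gamma^2\mu^2}{L + \gamma^2L^2}\bigr)\bigl(\|\theta_t\|^2 + \|\omega_t\|^2\bigr)$. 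Iterating from $\|(\theta_0,\omega_0)\| \le r$ and using $1-x \le e^{-x}$ gives $\|(\theta_T,\omega_T)\|^2 \le e^{-\delta T} r^2$ with $\delta = \gamma^2\mu^2/(L+\gamma^2L^2)$; requiring the right-hand side to be at most $\epsilon^2$ produces exactly $T \ge \tfrac{2}{\delta}\log(r/\epsilon) = 2\tfrac{\gamma^2L^2 + L}{\gamma^2\mu^2}\log(r/\epsilon) = T_{\rm PM}$.

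The only real obstacle is bookkeeping: one must expand the squared norms carefully, verify that every $\theta_t$–$\omega_t$ cross term cancels, and be careful to bound $\theta^T(CC^T)^2\theta$ by $L\,\theta^T CC^T\theta$ rather than by $L^2\|\theta\|^2$, so that only the first power of the condition number $L/\mu$ enters the final rate. Everything else — checking the sign condition on $\eta$, and combining the two block bounds — is routine.
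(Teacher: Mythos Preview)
Your proof is correct and follows essentially the same route as the paper: you derive the same one-step linear map $G$, and your cancellation of the $\theta$--$\omega$ cross terms in $\|Gz\|^2$ is exactly the block-diagonality of $GG^T$ that the paper computes directly (and then diagonalizes via the SVD of $C$), yielding the identical contraction factor $1-\gamma^2\mu^2/(L+\gamma^2L^2)$. The only cosmetic difference is that the paper first observes the PM update matrix coincides with the Consensus Optimization update matrix in the bilinear case and then invokes the CO analysis, whereas you carry out the singular-value bound from scratch.
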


\subsection{Implicit Updates}
Implicit Update (IU) rules have been shown to be more robust compared to explicit updates, and typically match the performance or even outperform the latter empirically in online learning \citep{kulis2010implicit}. We will show that a simple adaptation of implicit updates for simultaneous gradient ascent/descent resolves the instability problem in the bi-linear case.

\begin{thm}[Exponential Convergence: IU]
	\label{lem:unstable-implicit}
	Consider a bi-linear game $U(\theta,  \omega) = \theta^T C \omega.$ Assume $p = q$ and $C$ is full rank. Then the implicit updates
	\begin{align*}
		\theta_{t+1} &= \theta_t - \eta \nabla_\theta U(\theta_{t+1}, \omega_{t+1})  \enspace,\\
		\omega_{t+1} &= \omega_t + \eta \nabla_\omega U(\theta_{t+1}, \omega_{t+1}) \enspace,
	\end{align*}
	with the learning rate 
	\begin{align*}
		\eta =  \frac{1}{\sqrt{\lambda_{\max}(CC^T)}}
	\end{align*}
	obtains an $\epsilon$-minimizer such that $(\theta_T, \omega_T) \in B_2(\epsilon)$, provided
	\begin{align*}
		T \geq T_{\rm IU} :=  \left\lceil (2+\sqrt{2})\frac{\lambda_{\max}(CC^T)}{\lambda_{\min}(CC^T)}  \log \frac{r}{\epsilon} \right\rceil
	\end{align*}
	under the assumption that $\| (\theta_0, \omega_0) \|\leq r$.
\end{thm}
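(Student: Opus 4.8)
\textbf{Proof proposal for Theorem~\ref{lem:unstable-implicit} (Implicit Updates).}

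The plan is to solve the implicit update equations explicitly using the bilinear structure and then track the squared norm $\|(\theta_t,\omega_t)\|^2$. For the bilinear game $U(\theta,\omega)=\theta^T C\omega$ we have $\nabla_\theta U = C\omega$ and $\nabla_\omega U = C^T\theta$, so the implicit update reads $\theta_{t+1} = \theta_t - \eta C\omega_{t+1}$, $\omega_{t+1} = \omega_t + \eta C^T\theta_{t+1}$. First I would substitute the second equation into the first to obtain $\theta_{t+1} = \theta_t - \eta C(\omega_t + \eta C^T\theta_{t+1})$, i.e.\ $(I + \eta^2 CC^T)\theta_{t+1} = \theta_t - \eta C\omega_t$, and symmetrically $(I + \eta^2 C^TC)\omega_{t+1} = \omega_t + \eta C^T\theta_t$. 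Since $I + \eta^2 CC^T$ is positive definite (hence invertible), this yields a well-defined closed-form linear map $(\theta_{t+1},\omega_{t+1}) = M(\theta_t,\omega_t)$ with $M = (I+\eta^2 \mathcal{C})^{-1}(I - \eta^2\mathcal{C}')$-type block structure, where the off-diagonal and diagonal pieces involve $C$, $CC^T$, $C^TC$.

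Next I would compute $\|\theta_{t+1}\|^2 + \|\omega_{t+1}\|^2$ directly. The cleanest route is: from $(I+\eta^2 CC^T)\theta_{t+1} = \theta_t - \eta C\omega_t$, take squared norms on both sides, and similarly for $\omega$. Adding the two gives $\|(I+\eta^2 CC^T)\theta_{t+1}\|^2 + \|(I+\eta^2 C^TC)\omega_{t+1}\|^2 = \|\theta_t\|^2 + \|\omega_t\|^2 + \eta^2(\|C\omega_t\|^2 + \|C^T\theta_t\|^2) - 2\eta(\theta_t^T C\omega_t - \omega_t^T C^T\theta_t)$, and the last cross term vanishes identically because $\theta_t^T C\omega_t = \omega_t^T C^T\theta_t$. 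This is the key algebraic cancellation — the interaction term contributes a positive quantity $\eta^2(\|C\omega_t\|^2+\|C^T\theta_t\|^2)$ on the right. To convert this into a contraction, I bound the left side from below by $(1+\eta^2\lambda_{\min}(CC^T))^2(\|\theta_{t+1}\|^2+\|\omega_{t+1}\|^2)$ using $\lambda_{\min}$ on the symmetric PSD factors (note $CC^T$ and $C^TC$ share the same nonzero spectrum, and full rank with $p=q$ means $\lambda_{\min}(CC^T)=\lambda_{\min}(C^TC)>0$), and bound the right side from above using $\|C\omega_t\|^2\le\lambda_{\max}(CC^T)\|\omega_t\|^2$ etc. With $\eta = 1/\sqrt{\lambda_{\max}(CC^T)}$ the right side becomes exactly $2(\|\theta_t\|^2+\|\omega_t\|^2)$, giving the per-step factor $2/(1+\lambda_{\min}(CC^T)/\lambda_{\max}(CC^T))^2$.

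Finally I would check that this per-step factor is strictly less than $1$ — writing $\kappa = \lambda_{\max}(CC^T)/\lambda_{\min}(CC^T)\ge 1$, the factor is $2/(1+1/\kappa)^2 = 2\kappa^2/(\kappa+1)^2$, which unfortunately exceeds $1$ for $\kappa$ close to $1$... so the naive bound is too lossy and I must be more careful: either use a tighter lower bound on the left side that does not collapse both eigenvalue ranges to $\lambda_{\min}$ (e.g.\ diagonalize $C = U\Sigma V^T$ and work mode-by-mode, where each singular-value mode $\sigma$ contributes a factor $(1+\eta^2\sigma^2)^{-2}(\ldots)$ that can be optimized), or pick a smaller $\eta$. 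Working in the SVD basis is in fact the honest approach: in each $2\times 2$ block indexed by a singular value $\sigma_i$, the map is explicit and the contraction factor is a clean function of $\eta^2\sigma_i^2$; taking the worst case over $i$ and then choosing $\eta$ gives the stated $\eta = 1/\sqrt{\lambda_{\max}(CC^T)}$ and the rate $(2+\sqrt2)\kappa$. I expect the main obstacle to be exactly this: getting the constant right so that the worst-case mode (which is the one with the \emph{smallest} singular value, since $\eta$ is tuned to the largest) still contracts, and extracting the $(2+\sqrt2)$ factor — this requires the mode-wise analysis rather than crude spectral bounds, together with the elementary inequality $\log(1/\rho)^{-1}\le C\kappa$ for the resulting per-mode contraction $\rho$. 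Once the mode-wise contraction factor $\rho<1$ is established, the iteration count follows immediately from $\rho^T r^2 \le \epsilon^2 \iff T \ge \log(r/\epsilon)/\log(1/\sqrt\rho)$ and bounding $1/\log(1/\sqrt\rho)$ by the stated multiple of $\kappa$.
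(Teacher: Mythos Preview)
Your fallback plan (diagonalize $C=U\Sigma V^T$ and analyze each $2\times 2$ mode) is correct and, once carried out, is equivalent to the paper's argument; the per-mode squared-norm factor you would find is exactly $1/(1+\eta^2\sigma_i^2)$, whose worst case over $i$ with $\eta=1/\sigma_{\max}$ gives the contraction $1/\sqrt{1+1/\kappa}$ on the norm, and the elementary inequalities $\sqrt{1+t}\ge 1+(\sqrt{2}-1)t$ and $1/(1+(\sqrt{2}-1)t)\le 1-(1-1/\sqrt{2})t$ for $t\in[0,1]$ then deliver the $(2+\sqrt{2})\kappa$ iteration bound.

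The paper reaches the same place without the detour or the explicit SVD. It never decouples $\theta$ and $\omega$: it simply writes the implicit step as $K\,x_{t+1}=x_t$ with
\[
K \;=\; I + \eta\begin{bmatrix}0 & C\\ -C^T & 0\end{bmatrix},
\]
and computes
\[
KK^T \;=\; \begin{bmatrix} I+\eta^2 CC^T & 0\\ 0 & I+\eta^2 C^TC\end{bmatrix},
\]
the off-diagonal blocks vanishing precisely because of the anti-symmetric structure. Hence $\sigma_{\min}(K)=\sqrt{1+\eta^2\lambda_{\min}(CC^T)}$, and $\sigma_{\min}(K)\,\|x_{t+1}\|\le \|x_t\|$ gives the contraction in one line. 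This is your mode-wise computation packaged without the change of basis.

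The reason your first route stalls is instructive: by eliminating $\omega_{t+1}$ you obtain $(I+\eta^2 CC^T)\theta_{t+1}=\theta_t-\eta C\omega_t$, and squaring that equation puts $(I+\eta^2 CC^T)^2$ on the left. Keeping the joint system and squaring via $K^TK$ puts only the \emph{first} power $I+\eta^2 CC^T$ on the left, which is exactly the missing factor that turns your lossy bound into the sharp one. (Small slip: your naive factor $2\kappa^2/(\kappa+1)^2$ exceeds $1$ for $\kappa>1+\sqrt{2}$, not for $\kappa$ near $1$.) So the moral is: do not decouple before taking norms; let the anti-symmetry do the cancellation inside $K^TK$.
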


\subsection{Consensus Optimization}
Consensus Optimization (CO) is another elegant attempt to fix the aforementioned problem, proposed in \cite{mescheder2017numerics}. The idea is to add a potential component to the pure-curl vector field associated with SGA in the bi-linear game, in order to attract the dynamics to the critical points. \citep{mescheder2017numerics, nagarajan2017gradient} analyzed the infinitesimal flow version of the consensus optimization, and intuitively showed that it pushes the real part of the eigenvalue away from $1$, to ensure asymptotic convergence. In this section, we provide a simple convergence analysis of the discretized dynamics, of the same flavor as the previous section. An upshot of the analysis is that it sheds light on possible choices of learning rate. 

	Recall that the regularization term defining consensus optimization is given by,
	\begin{align}
		\label{eq:consensus-field}
		R(\theta,  \omega) = \frac{1}{2}\left(\| \nabla_{\theta} U(\theta, \omega) \|^2 + \| \nabla_{\omega} U(\theta, \omega) \|^2 \right) \enspace.
	\end{align}
	Surprisingly, we find that the consensus optimization coincides with the modified predictive method for the bi-linear game, as described by the following
	
	\begin{thm}[Exponential Convergence: CO]
		\label{lem:unstable-co}
		Consider a bi-linear game $U(\theta,  \omega) = \theta^T C \omega.$ Assume $p = q$ and $C$ is full rank. Recall $R(\theta, \omega)$ defined in Eqn.~\eqref{eq:consensus-field}, and fix some $\gamma > 0$. 
		Then the CO dynamics with the same learning rate $\eta$ as in Thm.~\ref{lem:unstable-pm},
	 	\begin{align}
			\label{eq:co}
	 		\theta_{t+1} &= \theta_{t} - \eta \left[ \nabla_{\theta} U(\theta_t, \omega_t) + \gamma \nabla_{\theta} R(\theta_t,  \omega_t) \right] \enspace, \nonumber\\
	 		\omega_{t+1} &= \omega_{t} +  \eta \left[ \nabla_{\omega} U(\theta_t, \omega_t) - \gamma \nabla_{\omega} R(\theta_t,  \omega_t) \right] \enspace,
	 	\end{align}
		converges exponentially fast in the same way as the PM dynamics in Thm.~\ref{lem:unstable-pm}.    
	\end{thm}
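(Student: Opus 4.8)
The plan is to show that for the bi-linear game $U(\theta,\omega) = \theta^T C \omega$, the CO dynamics in Eqn.~\eqref{eq:co} and the modified PM dynamics in Eqn.~\eqref{eq:pm} produce \emph{exactly the same update map} on $(\theta_t,\omega_t)$, so that Theorem~\ref{lem:unstable-pm} applies verbatim. First I would compute the gradients explicitly. For the bi-linear value function, $\nabla_\theta U(\theta,\omega) = C\omega$ and $\nabla_\omega U(\theta,\omega) = C^T\theta$. Therefore $R(\theta,\omega) = \tfrac{1}{2}(\|C\omega\|^2 + \|C^T\theta\|^2)$, whence $\nabla_\theta R(\theta,\omega) = CC^T\theta$ and $\nabla_\omega R(\theta,\omega) = C^TC\omega$. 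Substituting into Eqn.~\eqref{eq:co}, the CO update becomes
\begin{align*}
	\theta_{t+1} &= \theta_t - \eta\bigl(C\omega_t + \gamma CC^T\theta_t\bigr) = \theta_t - \eta C\bigl(\omega_t + \gamma C^T\theta_t\bigr)\enspace, \\
	\omega_{t+1} &= \omega_t + \eta\bigl(C^T\theta_t - \gamma C^TC\omega_t\bigr) = \omega_t + \eta C^T\bigl(\theta_t - \gamma C\omega_t\bigr)\enspace.
\end{align*}

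Next I would expand the PM dynamics for the same value function. The predictive step gives $\theta_{t+1/2} = \theta_t - \gamma C\omega_t$ and $\omega_{t+1/2} = \omega_t + \gamma C^T\theta_t$, and the gradient step gives
\begin{align*}
	\theta_{t+1} &= \theta_t - \eta\, C\omega_{t+1/2} = \theta_t - \eta\, C\bigl(\omega_t + \gamma C^T\theta_t\bigr)\enspace, \\
	\omega_{t+1} &= \omega_t + \eta\, C^T\theta_{t+1/2} = \omega_t + \eta\, C^T\bigl(\theta_t - \gamma C\omega_t\bigr)\enspace.
\end{align*}
These coincide termwise with the CO update derived above. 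Since the two discrete-time dynamical systems are identical (same initialization, same learning rate $\eta$ and same $\gamma$), every conclusion of Theorem~\ref{lem:unstable-pm}—in particular the choice $\eta = \gamma\lambda_{\min}(CC^T)/\bigl(\lambda_{\max}(CC^T) + \gamma^2\lambda^2_{\max}(CC^T)\bigr)$ and the iteration bound $T_{\rm PM}$—transfers immediately to CO, which is exactly the claim.

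I do not anticipate a genuine obstacle here; the proof is essentially an identity check. The only point requiring a little care is making sure the sign conventions in Eqn.~\eqref{eq:co} line up: the regularizer $R$ is added with $+\gamma$ in the $\theta$-update and its $\omega$-component is subtracted with $-\gamma$ in the $\omega$-update, and one must verify this matches the asymmetry already present in the PM predictive step (where $\theta$ moves against $C\omega_t$ and $\omega$ moves with $C^T\theta_t$). A quick check of these four signs confirms the match. If desired, one can also present the argument compactly in matrix form by stacking $z_t = (\theta_t,\omega_t)$ and writing both updates as $z_{t+1} = (I - \eta M(\gamma))z_t$ for the same matrix $M(\gamma)$, but the coordinate-wise comparison above already suffices.
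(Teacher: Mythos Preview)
Your proof is correct and rests on the same key identity the paper uses: for the bilinear game, the CO update and the PM update are literally the same linear map $z_{t+1}=(I-\eta M(\gamma))z_t$. The only difference is organizational. The paper places the actual singular-value computation (showing $KK^T$ is block-diagonal and bounding $\lambda_{\max}\bigl((I-\eta\gamma CC^T)^2+\eta^2 CC^T\bigr)$) inside the proof of the CO theorem, and then the PM proof simply observes that PM yields the same linear system and defers to that analysis. You do the reverse, pointing CO to PM. That is fine as long as Theorem~\ref{lem:unstable-pm} has been given a self-contained proof; just be aware that in the paper the dependency runs the other way, so taken together your reduction plus the paper's PM proof would be circular unless one of the two carries the explicit spectral bound.
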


\section{Experiments}\label{sec:experiments}
In the simplistic setting of bilinear games we have seen that exponential convergence can be achieved for appropriate choice of learning rate and this is indeed confirmed by numerical experiments as shown in Fig. \ref{fig:bilinear}. In reality, however, the assumption of bilinearity is not applicable to value functions of GAN form and indeed
\begin{figure}[h]
\centering
\includegraphics[width=\linewidth]{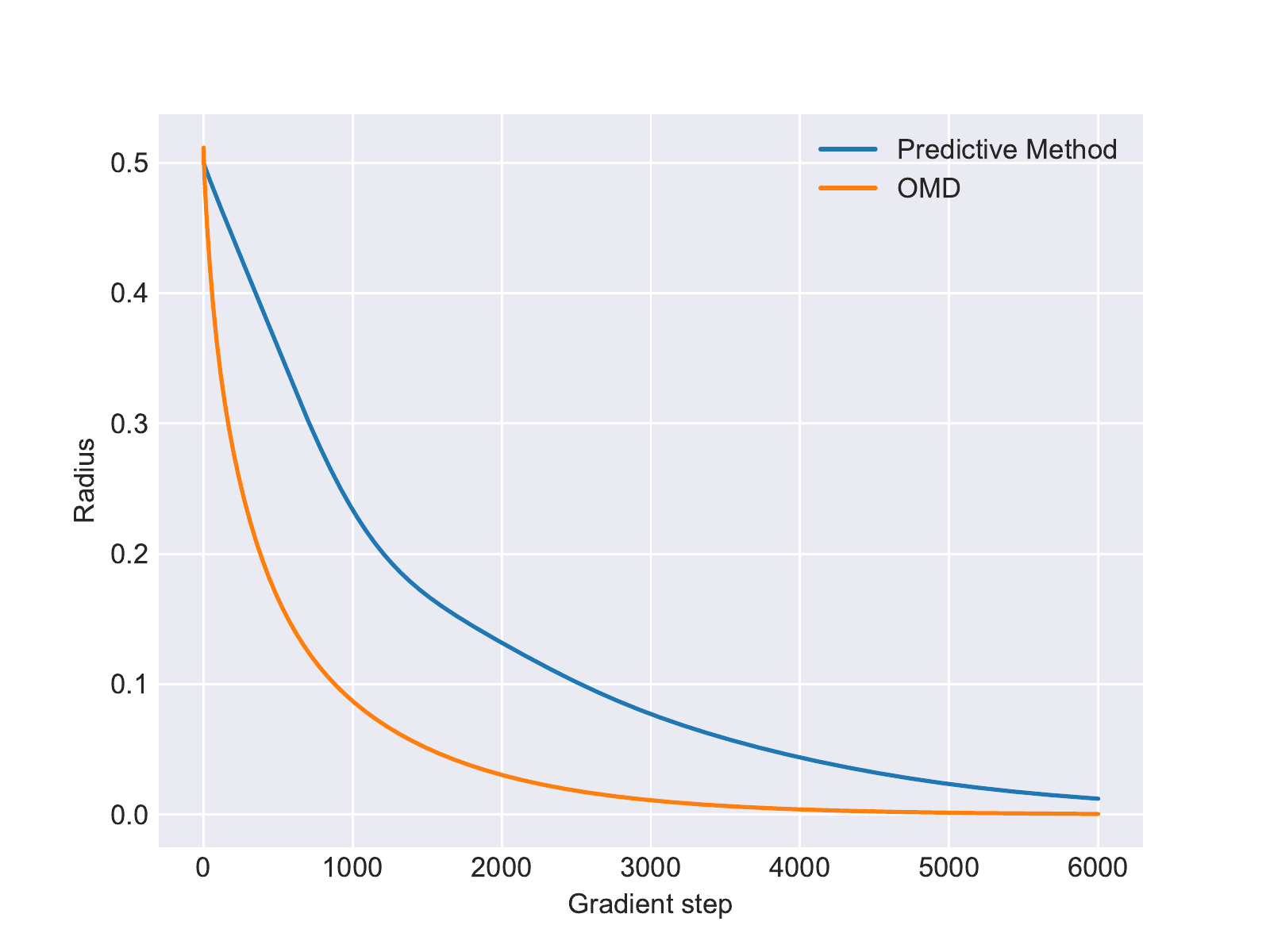}
\caption{Distance to Nash equilibrium as a function of gradient iteration for the bilinear game assuming $p=q=5$, $\gamma = 1$ and $r=0.5$. The components of the interaction matrix $C$ were chosen i.i.d. uniform on $[0,1]$.}
\label{fig:bilinear}
\end{figure}
recent large-scale studies of GAN optimization \citep{lucic2017gans} suggest that improvements from algorithmic changes mostly disappeared after taking into account hyper-parameter tuning and randomness of initialization. They conclude that ``future GAN research should be based on more systematic and objective evaluation procedures.'' Inspired by this conclusion, we conduct a systematic evaluation of the proposed optimization algorithms on two basic density learning problems, and introduce corresponding objective evaluation metrics. The goal of this analysis is not to achieve state-of-art performance, but rather to compare and contrast the existing proposals in a carefully controlled learning environment. We focus on the Wasserstein GAN formulation so that the value function is given by
 \begin{equation*}
	 U(\theta, \omega) = \underset{X \sim \mathcal{P}_{\rm real}}{\mathbb{E}} f_\omega(X) - \underset{Z \sim N(0, I_k)}{\mathbb{E}} f_\omega(g_\theta(Z)) \enspace ,
\end{equation*}
where $f_\omega: \mathbb{R}^d \to \mathbb{R}$ is a multi-layer neural network with $L$ hidden layers and rectifier non-linearities and the input distribution $\mathcal{P}_{\rm input}$ was chosen to be $k$-dimensional standard Gaussian noise. Following \cite{gulrajani2017improved}, we impose the Lipschitz-1 constraint on the discriminator network using the two-sided gradient penalty term $\Lambda(\omega)$ introduced in \cite[Eqn.~(3)]{gulrajani2017improved}. The consensus optimization loss is defined with respect to the value function as in \eqref{eq:consensus-field} without including gradient penalty. The combined loss function of the discriminator and generator are respectively,
\begin{align*}
	L_{\rm dis}(\theta,\omega)
		& = -U(\theta,\omega) + \gamma \, R(\theta, \omega) + \lambda \, \Lambda(\omega) \enspace , \\
	L_{\rm gen}(\theta,\omega)
		& = U(\theta,\omega) + \gamma \, R(\theta, \omega) \enspace .
\end{align*}
The coefficients of the gradient penalty and consensus optimization terms were determined by a coarse parameter search and then locked to $\lambda = \gamma = 1$ throughout. In order to make close contact with our theoretical formalism, we optimize the above loss functions using simultaneous gradient updates with fixed learning rate of $\eta = 10^{-3}$. 
\subsection{Learning Covariance of Multivariate Gaussian}
Consider the problem of learning the covariance matrix $\Sigma \in S^d_{++}$ of a $d$-dimensional multivariate Gaussian distribution $\mathcal{P}_{\rm real} = N(0, \Sigma)$ with non-degenerate covariance $\Sigma \succ 0$. Note that the learning problem is well-specified if we choose the generator function  $g_\theta : \mathbb{R}^k \to \mathbb{R}^d$  to be a simple linear transformation of the $k$-dimensional latent space ($k \geq d$). Although the GAN approach is clearly overkill for this simple density estimation problem, we find this example illuminating because it affords some analytical tractability for the otherwise intractable general GAN value function \eqref{e:gan}. Specifically, if we choose the discriminator function $f_\omega : \mathbb{R}^d \to \mathbb{R}$ to be a neural network with $L=1$ hidden layer consisting of $H$ hidden units with rectifier nonlinearities, and set biases to zero, then the explicit functional forms of discriminator and generator are respectively,
\begin{align*}
f_\omega (x) = \sum_{i=1}^H v_i \langle w_i , x \rangle \, \mathbf{1}_{\{ \langle w_i,  x \rangle \geq 0 \}} \enspace , \quad \quad g_\theta(z)  = V z \enspace ,
\end{align*}
where $\omega \in \{ w_i \in \mathbb{R}^d, v_i \in \mathbb{R} : \forall i \in [H] \}$ and $\theta \in \{ V \in \mathbb{R}^{d\times k} \}$ are the discriminator and generator parameters, respectively. 
If, moreover, we express the covariance matrix as $\Sigma = AA^T$, then the value function can be expressed in closed form as,
\begin{equation}\label{e:toy}
U(\theta, \omega) = \mathrm{const} \times \sum_{i=1}^H v_i \big[ \Vert A^T w_i \Vert - \Vert V^T w_i \Vert \big] \enspace .
\end{equation}
The above analytical form of the value function sheds some light on the nature of the local Nash equilibrium solution concept. In particular, if one solves for the condition of being a Nash equilibrium, one does not conclude that $V_\ast V_\ast^T = \Sigma$. The result depends on the rank of the matrix $[w_1,\ldots,w_H]$.

The evaluation of different optimization algorithms involved comparing the target density $\mathcal{P}_{\rm real}=N(0,\Sigma)$ and the analytical generator density $\mathcal{P}_{\rm fake}=N(0, VV^T)$ after $t=10^5$ training iterations (Fig.~\ref{fig:boxplot}). For simplicity, we chose the evaluation metric to be the Frobenius norm of the difference between the covariance matrices $\Vert \Sigma - VV^T \Vert_{\rm F}$.
The covariance learning experiments were conducted in the well-specified and over-parametrized regime ($k=16$, $d=2$) using $H = 128$ hidden units for the discriminator network. 

\subsection{Mixture of Gaussians}
In practical applications, GANs are typically trained using the empirical distribution of the samples, where the samples are drawn from an idealized multi-modal probability distribution.  
To capture the notion of a multi-modal data distribution, we focus on a mixture of 8 Gaussians with means located at the vertices of a regular octagon inscribed in the unit circle, where each component has a fixed diagonal covariance of width $\sigma = 0.03$. In contrast to previous visual-based evaluations, we estimate the Wasserstein-1 distance $W_1(\mathcal{P}_{\rm real}, \mathcal{P}_{\rm fake})$ between the target density $\mathcal{P}_{\rm real}$ and the distribution $\mathcal{P}_{\rm fake}$ of the random variable $g_\theta(Z)$ implied by the trained generator network. The estimate is obtained by solving the linear program which computes the earth mover's distance between the sample estimates $\widehat{\mathcal{P}}_{\rm real} = \frac{1}{m}\sum_{i=1}^m \delta_{X_i}$ and $\widehat{\mathcal{P}}_{\rm fake} = \frac{1}{m}\sum_{i=1}^m \delta_{g_\theta(Z_i)}$, respectively, and approaches the population version $W_1(\mathcal{P}_{\rm real}, \mathcal{P}_{\rm fake})$ as the number of samples $m \to \infty$.

The experiments with the mixture of Gaussians used 2 dimensional Gaussian as input ($k=2$). Both the generator and discriminator networks consisted of $L=4$ hidden layers with $H=128$ units per hidden layer. The estimate of the Wasserstein-1 distance was calculated using a sample size of $m=512$ after training for $t = 5\cdot10^4$ iterations. It is clear from Fig.~\ref{fig:density} that the Wasserstein-1 distance correlates closely with the visual fit to the target distribution. The empirical evaluation (Fig.~\ref{fig:boxplot}) shows that the separation between consensus optimization and competing algorithms disappears on the mixture distribution, suggesting that the qualitative ranking is not robust to the choice of loss landscape. These findings demand deeper understanding of the global structure of the landscape, including the formulation of regularization to tame the notoriously difficult GAN optimization \citep{arbel2018gradient}, which is not captured by our local stability analysis.

\begin{figure}[h]
\centering
\includegraphics[width=\linewidth]{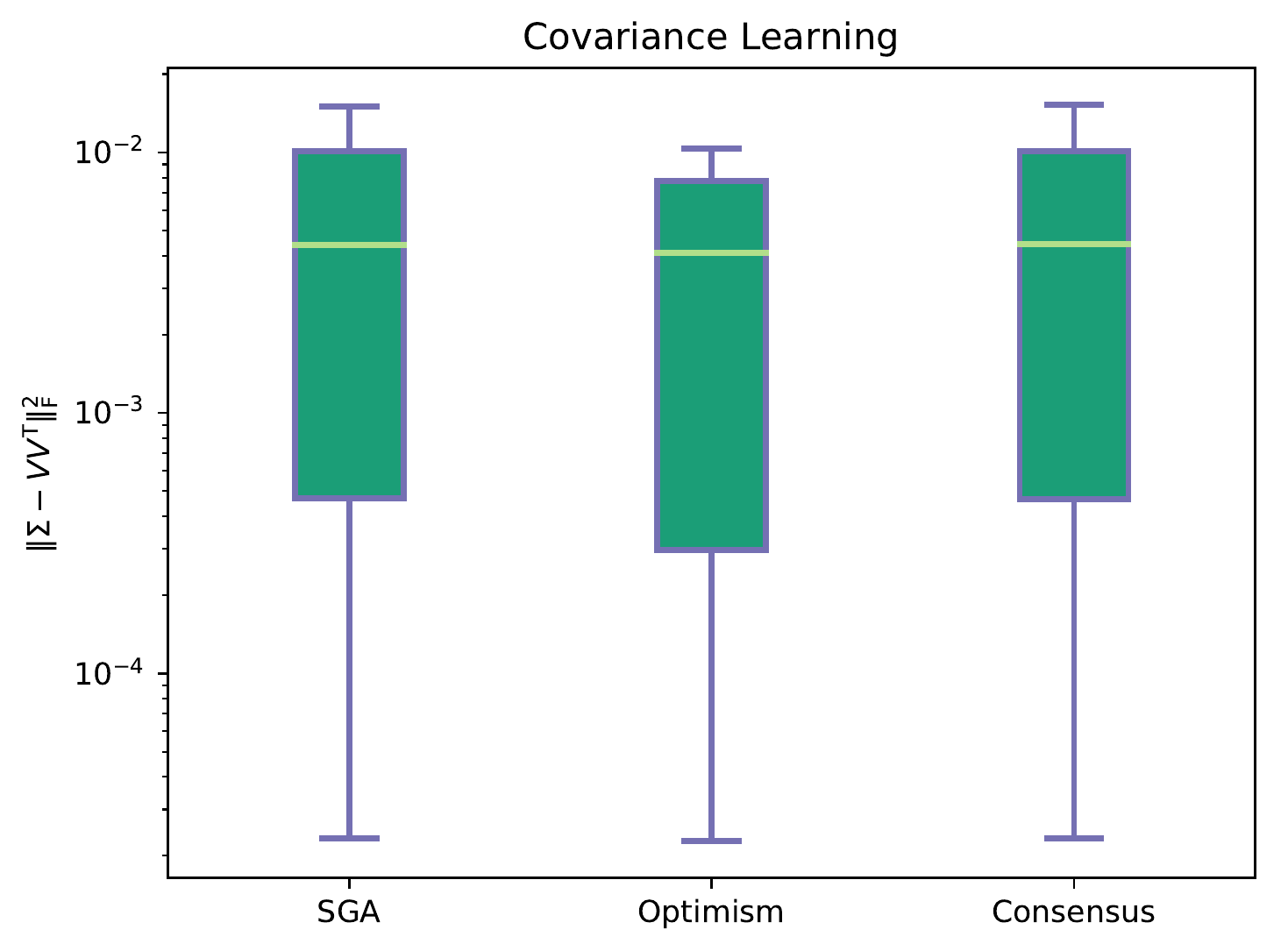}
\includegraphics[width=\linewidth]{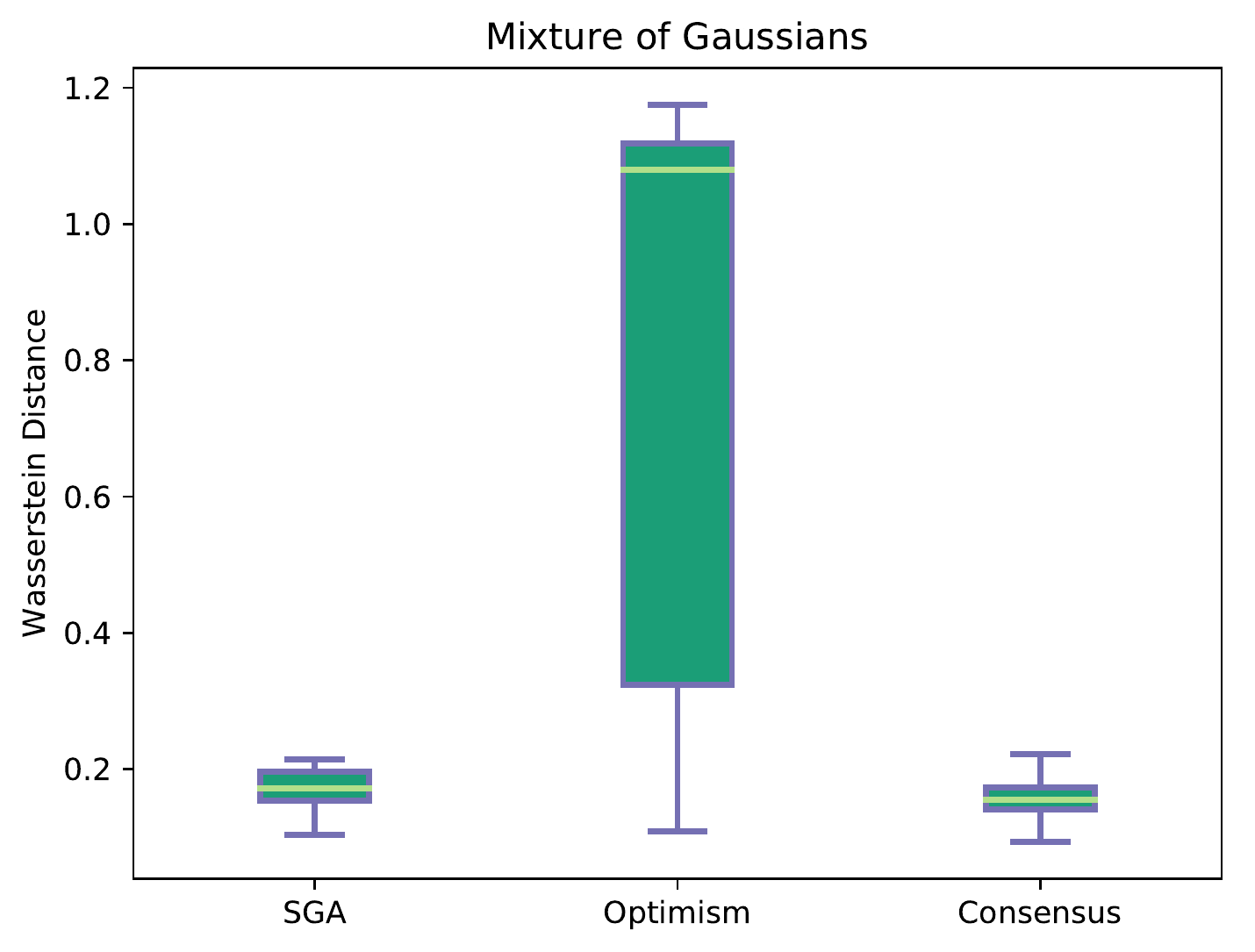}
\caption{Evaluation metrics for covariance learning (top) and mixture of Gaussians learning (bottom) using different dynamical systems after $t=10^5$ and $t=5\cdot 10^4$ training iterations, respectively and 16 random seeds. Note that for covariance learning, we use the log-scale on $y$-axis.}
\label{fig:boxplot}
\end{figure}

\begin{figure}[h]
\centering
\includegraphics[width=0.6\linewidth]{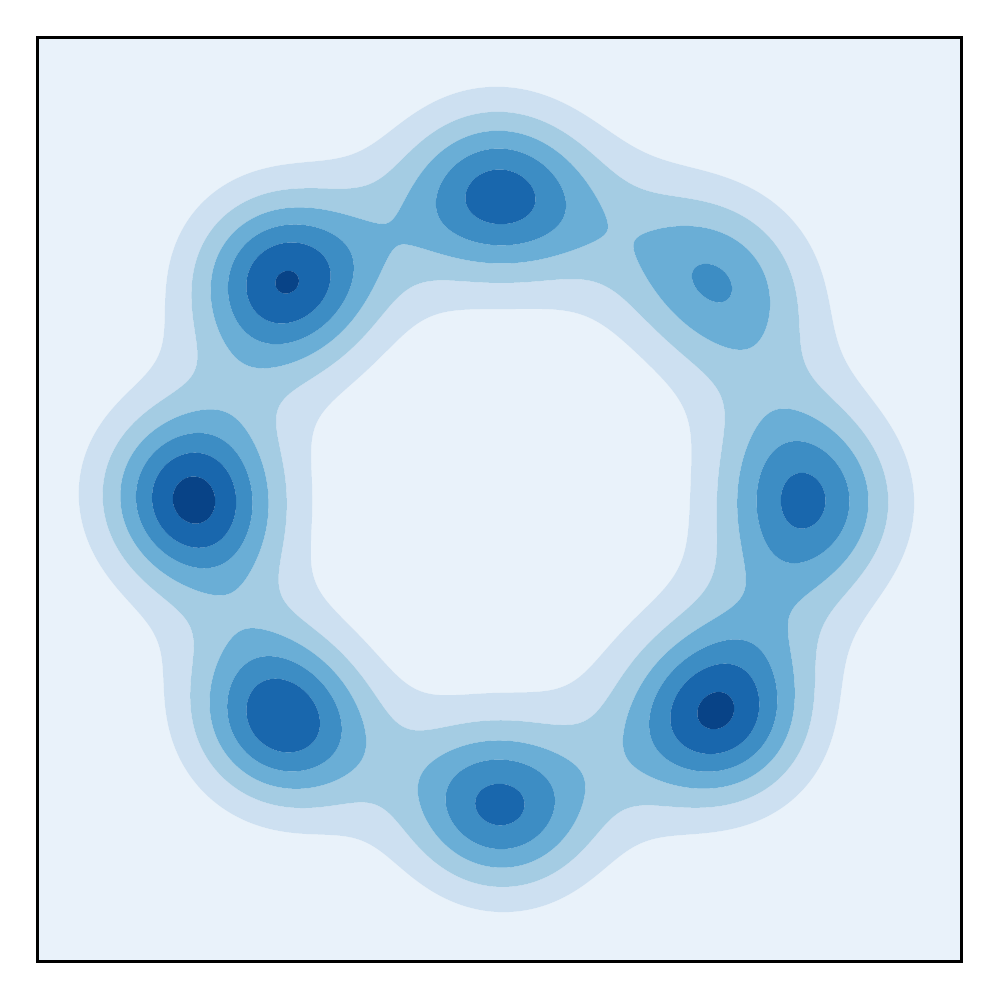}
\includegraphics[width=0.6\linewidth]{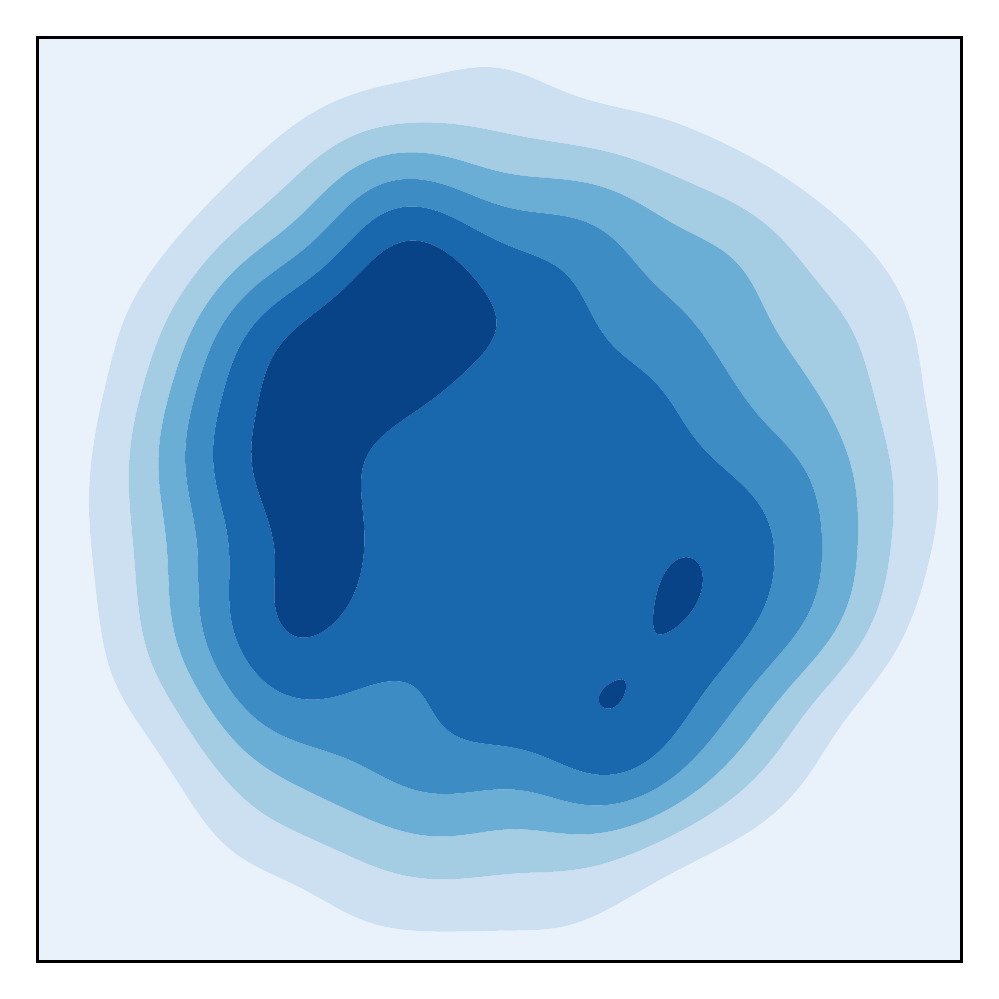}
\caption{Density plots of best and worst generator distribution measured by empirical Wasserstein-1 distance from the target distribution, across all baselines amongst 16 random seeds (excluding non-convergent runs) after training for $5\times 10^4$ iterations. Top: Consensus ($W_1=0.093$). Bottom: OMD ($W_1=0.367$).}
\label{fig:density}
\end{figure}

\section{Conclusions and Future Work}
In this paper we made a first step towards understanding the local convergence rate of the discrete-time gradient-based saddle point dynamics for solving smooth two-player zero-sum games, including GANs as the leading motivation. The focus of the paper is on illustrating how local geometry affects the convergence speed and choice of learning-rate for both stable and unstable local Nash Equilibria. A curious fact we proved is that modified first-order dynamics such as OMD converge with linear-rate to unstable local Nash Equilibria, as a consequence of the interaction term between the two players.

We acknowledge that there are still critical steps left open by our analysis in order to understand the effectiveness of heuristic methods for training GANs for distribution learning. Solving this problem requires an understanding of the ability of various stable/unstable local Nash Equilibria to represent distributions in the statistical sense. To the best of our knowledge, it remains unclear whether convergence to the stable local solution concept (Definition \ref{def:slne}) is better than converging/oscillating/escaping an unstable local solution, in terms of distribution learning. Recent progress by \cite{daskalakis2018limit, daskalakis2018last} employs the stable manifold Theorem \citep{galor2007discrete} to show that certain dynamics avoid unstable local solutions (barring initialization in a set of Lebesgue measure zero). Overall, a satisfactory theory\,---\,in both the computational and statistical sense\,---\,for answering how heuristic gradient-based saddle point dynamics for GANs are able to learn distributions is still wide open for future investigation.

\bibliography{ref}
\bibliographystyle{plainnat}

\newpage
\appendix
\onecolumn
\section{Technical Proofs}
\label{sec:Appendix}
\begin{proof}[Proof of Theorem~\ref{thm:stable-sga}]
	Define the line interpolation between two points, 
	\begin{align*}
		\theta(x) &= x\theta_t + (1-x)\theta_* \enspace, \\
		\omega(x) &= x\omega_t + (1 - x)\omega_* \enspace.
	\end{align*}
	Then the SGA dynamics can be written as (using Taylor's theorem with remainder)
	\begin{align*}
		\begin{bmatrix}
			\theta_{t+1} - \theta_*\\
			\omega_{t+1} - \omega_*
		\end{bmatrix} &=  \begin{bmatrix}
			\theta_t - \theta_*\\
			\omega_t - \omega_*
		\end{bmatrix} - \eta \begin{bmatrix}
			\nabla_{\theta} U(\theta_t, \omega_t) \\
			-\nabla_{\omega} U(\theta_t, \omega_t)
		\end{bmatrix} \enspace,
		 \\
		&= \begin{bmatrix}
			\theta_t - \theta_*\\
			\omega_{t} - \omega_*
		\end{bmatrix}
		 - \eta \int_{0}^1 
		\begin{bmatrix}
			\nabla_{\theta\theta} U( \theta(x), \omega(x)) & \nabla_{\theta \omega} U( \theta(x), \omega(x)) \\
			-\nabla_{\omega \theta} U( \theta(x), \omega(x)) & -\nabla_{\omega \omega} U( \theta(x), \omega(x)) 
		\end{bmatrix}    
			 d x \cdot \begin{bmatrix}
			 	\theta_t - \theta_* \\
				\omega_{t} - \omega_*
			 \end{bmatrix} \enspace, \\
		& =  \int_0^1 \left( I - \eta \begin{bmatrix}
			\nabla_{\theta\theta} U( \theta(x), \omega(x)) & \nabla_{\theta \omega} U( \theta(x), \omega(x)) \\
			-\nabla_{\omega \theta} U( \theta(x), \omega(x)) & -\nabla_{\omega \omega} U( \theta(x), \omega(x)) 
		\end{bmatrix}  \right) dx \cdot \begin{bmatrix}
			 	\theta_t - \theta_* \\
				\omega_{t} - \omega_*
			 \end{bmatrix} \enspace.
	\end{align*}
	Assume that one can prove for some $r >0$, and $(\theta, \omega) \in B_2((\theta_*, \omega_*), r)$, with a proper choice of $\eta$, the largest singular value is bounded above by 1,  
	\begin{align*}
		\left\| I - \eta \begin{bmatrix}
					\nabla_{\theta\theta} U( \theta, \omega) & \nabla_{\theta \omega}U(\theta, \omega) \\
					-\nabla_{\omega \theta} U( \theta, \omega) & -\nabla_{\omega \omega} U( \theta, \omega) 
		\end{bmatrix} \right\|_{\rm op} < 1 \enspace.
	\end{align*}
	Then due to convexity of the operator norm, the dynamics of SGA is contracting locally because,
	\begin{align*}
		\left\| \begin{bmatrix}
			\theta_{t+1} - \theta_*\\
			\omega_{t+1} - \omega_*
		\end{bmatrix} \right\| 
		&\leq 
		\left\|
		\int_0^1 \left( I - \eta \begin{bmatrix}
					\nabla_{\theta\theta} U( \theta(x), \omega(x)) & \nabla_{\theta \omega} U( \theta(x), \omega(x)) \\
					-\nabla_{\omega \theta} U( \theta(x), \omega(x)) & -\nabla_{\omega \omega} U( \theta(x), \omega(x)) 
				\end{bmatrix}  \right) dx
		\right\|_{\rm op} \cdot
		\left\| \begin{bmatrix}
			\theta_{t} - \theta_*\\
			\omega_{t} - \omega_*
		\end{bmatrix} \right\| \enspace ,\\
		&\leq \int_0^1 \left\| \left( I - \eta \begin{bmatrix}
					\nabla_{\theta\theta} U( \theta(x), \omega(x)) & \nabla_{\theta \omega} U( \theta(x), \omega(x)) \\
					-\nabla_{\omega \theta} U( \theta(x), \omega(x)) & -\nabla_{\omega \omega} U( \theta(x), \omega(x)) 
				\end{bmatrix}  \right) \right\|_{\rm op} dx
		 \cdot
		\left\| \begin{bmatrix}
			\theta_{t} - \theta_*\\
			\omega_{t} - \omega_*
		\end{bmatrix} \right\|  \enspace,\\
		&<  \left\| \begin{bmatrix}
			\theta_{t} - \theta_*\\
			\omega_{t} - \omega_*
		\end{bmatrix} \right\| \enspace.
	\end{align*}
	
	Let's analyze the singular values of  
	$$
	I - \eta \begin{bmatrix}
						\nabla_{\theta\theta} U( \theta, \omega) & \nabla_{\theta \omega}U(\theta, \omega) \\
						-\nabla_{\omega \theta} U( \theta, \omega) & -\nabla_{\omega \omega} U( \theta, \omega) 
			\end{bmatrix} \enspace,
	$$
	assuming $\nabla_{\theta\theta} U( \theta, \omega) \succ 0$, $-\nabla_{\omega \omega} U( \theta, \omega) \succ 0$. Abbreviate
	\begin{align*}
		\begin{bmatrix}
				\nabla_{\theta\theta} U( \theta, \omega) & \nabla_{\theta \omega}U(\theta, \omega) \\
				-\nabla_{\omega \theta} U( \theta, \omega) & -\nabla_{\omega \omega} U( \theta, \omega) 
		\end{bmatrix} := 
		\begin{bmatrix}
			A & C\\
			-C^T & B
		\end{bmatrix} \enspace.
	\end{align*}
	The largest singular value of 
	$$
	I - \eta \begin{bmatrix}
						A & C\\
						-C^T & B
			\end{bmatrix} \enspace,
	$$
	is the square root of the largest eigenvalue of the following symmetric matrix
	\begin{align*}
		 \begin{bmatrix}
			I- \eta A & -\eta C\\
			\eta C^T & I - \eta B
		\end{bmatrix}
		 \begin{bmatrix}
			I- \eta A & \eta C\\
			-\eta C^T & I - \eta B
		\end{bmatrix} = 
		 \begin{bmatrix}
			(I- \eta A)^2 + \eta^2 CC^T &  -\eta^2 (AC-CB) \\
			-\eta^2 (C^TA - BC^T) & (I - \eta B)^2 + \eta^2 C^T C
		\end{bmatrix} \enspace.
	\end{align*}
	It is clear that when $\eta = 0$, the largest eigenvalue of the above matrix is $1$. Observe
	\begin{align*}
		 & \quad \begin{bmatrix}
			(I- \eta A)^2 + \eta^2 CC^T &  -\eta^2 (AC-CB) \\
			-\eta^2 (C^TA - BC^T) & (I - \eta B)^2 + \eta^2 C^T C
		\end{bmatrix} = I - 2\eta 
		\begin{bmatrix}
			A & 0 \\
			0 & B
		\end{bmatrix} + \eta^2 
		\begin{bmatrix}
			A^2 + CC^T & -AC+CB \\
			-C^TA+BC^T & B^2 + C^TC
		\end{bmatrix} \enspace, \\
		&  \prec \left[ 1 - 2\eta \lambda_{\min}\left( \begin{bmatrix}
			A & 0 \\
			0 & B
		\end{bmatrix} \right) + \eta^2 
		\lambda_{\max}\left( \begin{bmatrix}
			A^2 + CC^T & -AC+CB \\
			-C^TA+BC^T & B^2 + C^TC
		\end{bmatrix} \right) \right] I \enspace .
	\end{align*}
	If we choose $\eta$ to be
	\begin{align*}
		\eta = \frac{\min_{(\theta, \omega) \in B_2((\theta_*, \omega_*), r)}\lambda_{\min}\left( \begin{bmatrix}
			A_{\theta, \omega} & 0 \\
			0 & B_{\theta, \omega}
		\end{bmatrix} \right)}{\max_{(\theta, \omega) \in B_2((\theta_*, \omega_*), r)}\lambda_{\max}\left( \begin{bmatrix}
			A_{\theta, \omega}^2 + C_{\theta, \omega}C_{\theta, \omega}^T & -A_{\theta, \omega}C_{\theta, \omega}+C_{\theta, \omega}B_{\theta, \omega} \\
			-C_{\theta, \omega}^TA_{\theta, \omega}+B_{\theta, \omega}C^T_{\theta, \omega} & B_{\theta, \omega}^2 + C_{\theta, \omega}^TC_{\theta, \omega}
		\end{bmatrix} \right)} = \frac{\sqrt{\alpha}}{\beta} \enspace,
	\end{align*}
	then we find
	\begin{align*}
		\\\quad \begin{bmatrix}
			(I- \eta A)^2 + \eta^2 CC^T &  -\eta^2 (AC-CB) \\
			-\eta^2 (C^TA - BC^T) & (I - \eta B)^2 + \eta^2 C^T C
		\end{bmatrix}
				&\prec  \left( 1 - \frac{\alpha}{\beta} \right) I \enspace.
	\end{align*}
	In this case, 
	\begin{align*}
		\left\| \begin{bmatrix}
			\theta_{t+1} - \theta_*\\
			\omega_{t+1} - \omega_*
		\end{bmatrix} \right\| &\leq \sup_{(\theta, \omega) \in B_2((\theta_*, \omega_*), r)}\left\| I - \eta \begin{bmatrix}
					\nabla_{\theta\theta} U( \theta, \omega) & \nabla_{\theta \omega}U(\theta, \omega) \\
					-\nabla_{\omega \theta} U( \theta, \omega) & -\nabla_{\omega \omega} U( \theta, \omega) 
		\end{bmatrix} \right\|_{\rm op} \left\| \begin{bmatrix}
			\theta_{t} - \theta_*\\
			\omega_{t} - \omega_*
		\end{bmatrix} \right\| \enspace, \\
		& \leq \sqrt{1 - \frac{\alpha}{\beta}} \cdot \left\| \begin{bmatrix}
			\theta_{t} - \theta_*\\
			\omega_{t} - \omega_*
		\end{bmatrix} \right\| \enspace .
	\end{align*}
	Therefore, to obtain an $\epsilon$-minimizer one requires a number of steps equal to
	\begin{align*}
		2\frac{\beta}{\alpha} \log \frac{r}{\epsilon} \enspace.
	\end{align*}
\end{proof}

\begin{proof}[Proof of Remark~\ref{remark:GD}] Consider $U(\theta) = \frac{1}{2} \theta^T A \theta$ where $A \succ 0$ is strictly positive. Then gradient descent corresponds to $\theta_{t+1} = (I- \eta A)\theta_{t}$ and thus $\| \theta_{t+1} \| \leq \Vert I-\eta A \|_{\rm op} \| \theta_t \|$. Setting $\eta = 1/\lambda_{\max}(A)$ we have $I - \eta A \succeq 0$ so $\Vert I-\eta A \|_{\rm op} = \lambda_{\max}(I - \eta A) = 1-\lambda_{\min}(A) / \lambda_{\max}(A)$. Therefore $\Vert \theta_t \Vert \leq \Vert\theta_0\Vert\big[1-\lambda_{\min}(A)/\lambda_{\max}(A)\big]^t \leq \Vert \theta_0 \Vert e^{-t\lambda_{\min}(A)/\lambda_{\max}(A)}$. The number of iterations required to obtain an $\epsilon$-minimizer is thus bounded as $T \geq \frac{\lambda_{\max}(A)}{\lambda_{\min}(A)} \log \frac{r}{\epsilon}$.

\end{proof}

\begin{proof}[Proof of Corollary~\ref{cor:lower-bound}]
	We have 
	\begin{align*}
		\begin{bmatrix}
			\theta_{t+1} \\
			\omega_{t+1}
		\end{bmatrix} &=  \begin{bmatrix}
			\theta_t \\
			\omega_t
		\end{bmatrix} - \eta \begin{bmatrix}
			\nabla_{\theta} U(\theta_t, \omega_t) \\
			-\nabla_{\omega} U(\theta_t, \omega_t)
		\end{bmatrix} \enspace,
		 \\
		& =  \left( I - \eta \begin{bmatrix}
			I & C \\
			-C^T & I
		\end{bmatrix}  \right) \cdot \begin{bmatrix}
			 	\theta_t \\
				\omega_t
			 \end{bmatrix} \enspace.
	\end{align*}
	If we define $D =  \diag\big((1-\eta)^2 I + \eta^2 C C^T , (1-\eta)^2 I + \eta^2 C^T C \big)$ then using the Rayleigh quotient representation of $\lambda_{\min}(D)$ we obtain,
	\begin{align*}
		\left\| \begin{bmatrix}
			\theta_{t+1} \\
			\omega_{t+1}
		\end{bmatrix} \right\|^2 &=  
		\begin{bmatrix}
		 			 	\theta_t &
		 				\omega_t
		 \end{bmatrix}
		 D    
		\begin{bmatrix}
			 	\theta_t \\
				\omega_t
			 \end{bmatrix}
			 \geq \lambda_{\min}(D)
			 \left\| \begin{bmatrix}
			\theta_{t} \\
			\omega_{t}
		\end{bmatrix} \right\|^2
			  \enspace .
	\end{align*}
	On the other hand,
	\begin{align*}
		\lambda_{\min}(D) = \lambda_{\min}\left((1-\eta)^2 I + \eta^2 C C^T\right) = 1 - 2\eta + \big[1+ \lambda_{\min}(CC^T)\big] \eta^2 \geq \frac{\lambda_{\min}(CC^T)}{1+\lambda_{\min}(CC^T)}
	\end{align*}
	regardless of the choice of $\eta$, which proves the claim.
\end{proof}

\begin{proof}[Proof of Theorem~\ref{lem:unstable-omd}] 
		Recall that the OMD dynamics iteratively updates
		\begin{align*}
			\begin{bmatrix}
				\theta_{t+1} \\
				\omega_{t+1} 
			\end{bmatrix} = \left( I -  2\eta \begin{bmatrix}
				0& C \\
				-C^T & 0
			\end{bmatrix}  \right) \cdot \begin{bmatrix}
				 	\theta_t  \\
					\omega_t
					\end{bmatrix} +  \eta
			 \begin{bmatrix}
						0& C \\
						-C^T & 0
			\end{bmatrix}  \cdot \begin{bmatrix}
						 	\theta_{t-1}  \\
							\omega_{t-1}
			\end{bmatrix} \enspace.		
		\end{align*}
	Define the following matrices 
	\begin{align}
		R_1 = \frac{\left( I -  2\eta \begin{bmatrix}
				0& C \\
				-C^T & 0
			\end{bmatrix}  \right) + \left( I - 4 \eta^2 \begin{bmatrix}
				C C^T & 0 \\
				0 & C^T C \end{bmatrix} \right)^{1/2} }{2} \enspace, \\
		R_2 = \frac{\left( I -  2\eta \begin{bmatrix}
				0& C \\
				-C^T & 0
			\end{bmatrix}  \right) - \left( I - 4 \eta^2 \begin{bmatrix}
				C C^T & 0 \\
				0 & C^T C \end{bmatrix} \right)^{1/2} }{2} \enspace.
	\end{align}
	It is easy to verify that
	\begin{align*}
		R_1 + R_2 &= \left( I -  2\eta \begin{bmatrix}
				0& C \\
				-C^T & 0
			\end{bmatrix}  \right) \enspace, \\
		R_1 R_2 = R_2 R_1 &= \frac{\left( I -  2\eta \begin{bmatrix}
				0& C \\
				-C^T & 0
			\end{bmatrix}  \right)^2  - \left( I - 4 \eta^2 \begin{bmatrix}
				C C^T & 0 \\
				0 & C^T C \end{bmatrix} \right) }{4} = - \eta \begin{bmatrix}
						0& C \\
						-C^T & 0
			\end{bmatrix} \enspace.
	\end{align*}
	The commutative property $R_1 R_2 = R_2 R_1$ follows from a singular value decomposition argument: Letting
	$C = UDV^T$ be the SVD of $C$ ($D$ diagonal) one finds,
	\begin{align*}
		C \left( I - 4\eta^2 C^TC\right)^{1/2} = U D \left( I - 4\eta^2 D^2 \right)^{1/2} V^T = U \left( I - 4\eta^2 D^2 \right)^{1/2} D  V^T= \left( I - 4\eta^2 CC^T\right)^{1/2} C \enspace.
	\end{align*}
	Using the above equality, the commutative property follows
	\begin{align*}
		&\left( I -  2\eta \begin{bmatrix}
						0& C \\
						-C^T & 0
					\end{bmatrix}  \right)  \left( I - 4 \eta^2 \begin{bmatrix}
					C C^T & 0 \\
					0 & C^T C \end{bmatrix} \right)^{1/2} =  \left( I - 4 \eta^2 \begin{bmatrix}
					C C^T & 0 \\
					0 & C^T C \end{bmatrix} \right)^{1/2} \left( I -  2\eta \begin{bmatrix}
						0& C \\
						-C^T & 0
					\end{bmatrix}  \right) \enspace , \\
		&\Longrightarrow \quad R_1 R_2 = R_2 R_1 \enspace .
	\end{align*}

	Now we have the following relations for OMD,
		\begin{align*}
			\begin{bmatrix}
				\theta_{t+1} \\
				\omega_{t+1} 
			\end{bmatrix} - R_1  \begin{bmatrix}
				 	\theta_t  \\
					\omega_t
					\end{bmatrix} = R_2 \left( \begin{bmatrix}
				 	\theta_t  \\
					\omega_t
					\end{bmatrix} - R_1 \begin{bmatrix}
						 	\theta_{t-1}  \\
							\omega_{t-1}
			\end{bmatrix} \right) \enspace, \\
			\begin{bmatrix}
				\theta_{t+1} \\
				\omega_{t+1} 
			\end{bmatrix} - R_2  \begin{bmatrix}
				 	\theta_t  \\
					\omega_t
					\end{bmatrix} = R_1 \left( \begin{bmatrix}
				 	\theta_t  \\
					\omega_t
					\end{bmatrix} - R_2 \begin{bmatrix}
						 	\theta_{t-1}  \\
							\omega_{t-1}
			\end{bmatrix} \right) \enspace.
		\end{align*}
		Hence
		\begin{align}
			\label{eq:key}
			(R_1 - R_2) \begin{bmatrix}
				\theta_{t} \\
				\omega_{t} 
			\end{bmatrix} =  R_1^t \left( \begin{bmatrix}
				 	\theta_1  \\
					\omega_1
					\end{bmatrix} - R_2 \begin{bmatrix}
						 	\theta_{0}  \\
							\omega_{0}
			\end{bmatrix} \right) - 
			R_2^t \left( \begin{bmatrix}
							 	\theta_1  \\
								\omega_1
								\end{bmatrix} - R_1 \begin{bmatrix}
									 	\theta_{0}  \\
										\omega_{0}
						\end{bmatrix} \right) \enspace .
		\end{align}
	
		Let's analyze the singular values of $R_1$ and $R_2$. We have,
		\begin{align*}
			R_1 &= \frac{\left( I -  2\eta \begin{bmatrix}
				0& C \\
				-C^T & 0
			\end{bmatrix}  \right) + \left( I - 4 \eta^2 \begin{bmatrix}
				C C^T & 0 \\
				0 & C^T C \end{bmatrix} \right)^{1/2} }{2} \enspace, \\
				&= \begin{bmatrix}
				\frac{I + (I-4\eta^2 CC^T)^{1/2}}{2} & -\eta C \\
				\eta C^T  & \frac{I + (I-4\eta^2 C^T C)^{1/2}}{2} \end{bmatrix} \enspace, \\
			R_1 R_1^T &=  \begin{bmatrix}
				\frac{I + (I-4\eta^2 CC^T)^{1/2}}{2} & -\eta C \\
				\eta C^T  & \frac{I + (I-4\eta^2 C^T C)^{1/2}}{2} \end{bmatrix} \begin{bmatrix}
				\frac{I + (I-4\eta^2 CC^T)^{1/2}}{2} & \eta C \\
				-\eta C^T  & \frac{I + (I-4\eta^2 C^T C)^{1/2}}{2} \end{bmatrix} \enspace , \\
				& = \begin{bmatrix}
				\left( \frac{I + (I-4\eta^2 CC^T)^{1/2}}{2} \right)^2 + \eta^2 CC^T & 0 \\
				0  & \left( \frac{I + (I-4\eta^2 C^T C)^{1/2}}{2} \right)^2 + \eta^2 C^TC \end{bmatrix} \enspace , \\
				& = \begin{bmatrix}
				\frac{I + (I-4\eta^2 CC^T)^{1/2}}{2} & 0 \\
				0  & \frac{I + (I-4\eta^2 C^T C)^{1/2}}{2} \end{bmatrix} \enspace.
		\end{align*}
		Similarly
		\begin{align*}
				R_2 R_2^T =  \begin{bmatrix}
					\frac{I - (I-4\eta^2 CC^T)^{1/2}}{2} & 0 \\
					0  & \frac{I - (I-4\eta^2 C^T C)^{1/2}}{2} \end{bmatrix} \enspace.
		\end{align*}
		
		For $\eta$ small enough, the spectral radius satisfies the strict inequality $\| R_1 \|_{\rm op} < 1$. Concretely, for example,
		\begin{align*}
			&\eta = \frac{1}{2\sqrt{2\lambda_{\max}(CC^T)}} \implies \\
			&\frac{I + (I-4\eta^2 CC^T)^{1/2}}{2} \preceq \frac{I + (I-2\eta^2 CC^T)}{2} = I - \eta^2 CC^T \preceq \left[ 1 - \frac{1}{8} \frac{\lambda_{\min}(CC^T)}{\lambda_{\max}(CC^T)} \right] I \enspace, \\
			& \frac{I - (I-4\eta^2 CC^T)^{1/2}}{2} \preceq \frac{1}{2}I \enspace.
		\end{align*}
		Therefore $\| R_1 \|_{\rm op} \leq \sqrt{1 - \frac{1}{8} \frac{\lambda_{\min}(CC^T)}{\lambda_{\max}(CC^T)}}, ~~\| R_2\|_{\rm op} \leq \sqrt{1 - \frac{1}{2}}$. 
		
		The RHS of Eqn.~\eqref{eq:key} is upper bounded because
		\begin{align*}
		 \left\| R_2^t \left( \begin{bmatrix}
							 	\theta_1  \\
								\omega_1
								\end{bmatrix} - R_1 \begin{bmatrix}
									 	\theta_{0}  \\
										\omega_{0}
						\end{bmatrix} \right) \right\| &\leq \left(1-\frac{1}{2}\right)^{t/2} (\| (\theta_1, \omega_1)\| + \| (\theta_0, \omega_0)\|) \enspace ,\\
		 \left\| R_1^t \left( \begin{bmatrix}
							 	\theta_1  \\
								\omega_1
								\end{bmatrix} - R_2 \begin{bmatrix}
									 	\theta_{0}  \\
										\omega_{0}
						\end{bmatrix} \right) \right\| &\leq \left( 1 - \frac{1}{8} \frac{\lambda_{\min}(CC^T)}{\lambda_{\max}(CC^T)} \right)^{t/2} (\| (\theta_1, \omega_1)\| + \| (\theta_0, \omega_0)\|) \enspace.
		\end{align*}
		Moreover the LHS satisfies
		\begin{align*}
			\left\| (R_1 - R_2) \begin{bmatrix}
				\theta_{t} \\
				\omega_{t} 
			\end{bmatrix} \right\| \geq \left\| \begin{bmatrix}
				\theta_{t} \\
				\omega_{t} 
			\end{bmatrix} \right\| \sqrt{\frac{1}{2}} \enspace .
		\end{align*}
		Combining these inequalities we obtain
\begin{equation}
 \Vert (\theta_t, \omega_t) \Vert \sqrt{\frac{1}{2}} \leq \left[\left(1-\frac{1}{2}\right)^{t/2} + \left(1-\frac{1}{8}\frac{\lambda_{\rm min}(CC^T)}{\lambda_{\rm max}(CC^T)}\right)^{t/2}\right](\Vert (\theta_0, \omega_0) \Vert + \Vert (\theta_1, \omega_1) \Vert) \enspace .
\end{equation}
By our assumption $\Vert (\theta_0, \omega_0) \Vert , \Vert (\theta_1, \omega_1) \Vert \leq r$ so
\begin{align}
 \Vert (\theta_t, \omega_t) \Vert
 	& \leq 2\sqrt{2} r \left[\left(1-\frac{1}{2}\right)^{t/2} + \left(1-\frac{1}{8}\frac{\lambda_{\rm min}(CC^T)}{\lambda_{\rm max}(CC^T)}\right)^{t/2}\right] \enspace , \\
	& \leq  2\sqrt{2} r \left[\exp\left(-\frac{t}{4}\right) + \exp\left(-\frac{t}{16}\frac{\lambda_{\rm min}(CC^T)}{\lambda_{\rm max}(CC^T)}\right)\right] \enspace ,
\end{align}
because $\forall (x,\alpha) \in \mathbb{R} \times \mathbb{R}^+ : (1-x)^\alpha \leq e^{-\alpha x} $. Thus
\begin{align}
\Vert (\theta_t, \omega_t) \Vert
	& \leq  2\sqrt{2} r \exp\left(-\frac{t}{16}\frac{\lambda_{\rm min}(CC^T)}{\lambda_{\rm max}(CC^T)}\right) \left[1 + \exp\left(-\frac{t}{4}+\frac{t}{16}\frac{\lambda_{\rm min}(CC^T)}{\lambda_{\rm max}(CC^T)}\right)\right] \enspace .
\end{align}
But $16 \geq 4\frac{\lambda_{\rm min}(CC^T)}{\lambda_{\rm max}(CC^T)} \implies -\frac{1}{4}+\frac{1}{16}\frac{\lambda_{\rm min}(CC^T)}{\lambda_{\rm max}(CC^T)} \leq 0$ so
\begin{align}
\Vert (\theta_t, \omega_t) \Vert
	& \leq 2\sqrt{2} r \exp\left(-\frac{t}{16}\frac{\lambda_{\rm min}(CC^T)}{\lambda_{\rm max}(CC^T)}\right) \left[1 + 1\right] \enspace , \\
	& =  4\sqrt{2} r \exp\left(-\frac{t}{16}\frac{\lambda_{\rm min}(CC^T)}{\lambda_{\rm max}(CC^T)}\right) \enspace .
\end{align}
		To sum up, when
		$$
		T \geq \left\lceil 16\frac{\lambda_{\max}(CC^T)}{\lambda_{\min}(CC^T)} \log \frac{4\sqrt{2} r}{\epsilon} \right\rceil \enspace ,
		$$
		one can ensure $\| (\theta_T, \omega_T) \|\leq \epsilon$. 
\end{proof}

	\begin{proof}[Proof of Theorem~\ref{lem:unstable-co}]
		In this case, the consensus optimization satisfies the following update
		\begin{align}
			\begin{bmatrix}
				\theta_{t+1} \\
				\omega_{t+1}
			\end{bmatrix} = \left( I -  \eta \begin{bmatrix}
				\gamma CC^T & C \\
				-C^T & \gamma C^T C
			\end{bmatrix}  \right) \cdot \begin{bmatrix}
				 	\theta_t  \\
					\omega_t
					\end{bmatrix} \enspace .
		\end{align}
		Again let's analyze the singular values of the operator $K := \left( I -  \eta \begin{bmatrix}
				\gamma CC^T & C \\
				-C^T & \gamma C^T C
			\end{bmatrix}  \right)$, or equivalently, the eigenvalues of $KK^T$,
		\begin{align*}
			KK^T &= \begin{bmatrix}
				I - \eta\gamma CC^T & -\eta C \\
				\eta C^T & I - \eta\gamma C^T C
			\end{bmatrix} \begin{bmatrix}
				I - \eta\gamma CC^T & \eta C \\
				-\eta C^T & I - \eta\gamma C^T C
			\end{bmatrix} \enspace,	\\
			&= 	\begin{bmatrix}
				(I - \eta\gamma CC^T)^2 + \eta^2 CC^T & (I - \eta\gamma CC^T)\eta C - \eta C (I - \eta\gamma C^TC)\\
				\eta C^T (I - \eta\gamma CC^T) - (I - \eta\gamma C^TC) \eta C^T & (I - \eta\gamma C^T C)^2 + \eta^2 C^T C
			\end{bmatrix} \enspace, \\
			& = \begin{bmatrix}
				(I - \eta\gamma CC^T)^2 + \eta^2 CC^T & 0\\
				0 & (I - \eta\gamma C^T C)^2 + \eta^2 C^T C
			\end{bmatrix} \enspace.
		\end{align*}
		Now consider the largest eigenvalue of $(I - \eta\gamma CC^T)^2 + \eta^2 CC^T$, for a fixed $\gamma$, with a properly chosen $\eta$. Using the SVD $C = UDV^T$, we obtain
		\begin{align*}
			&(I - \eta\gamma CC^T)^2 + \eta^2 CC^T = U\left[ (I - \eta\gamma D^2)^2 + \eta^2 D^2 \right] U^T\\
			& \preceq \left[ 1 - 2\gamma  \lambda_{\min}(CC^T) \eta + (\gamma^2 \lambda^2_{\max}(CC^T) + \lambda_{\max}(CC^T)  \eta^2 \right] I \enspace, \\
			& = \left[  1 -  \frac{\gamma^2 \lambda^2_{\min}(CC^T)}{\gamma^2 \lambda^2_{\max}(CC^T) + \lambda_{\max}(CC^T)}  \right] I \enspace,
		\end{align*}
		with 
		$$
		\eta = \frac{\gamma \lambda_{\min}(CC^T)}{\lambda_{\max}(CC^T) + \gamma^2 \lambda^2_{\max}(CC^T)} \enspace.
		$$
	\end{proof}

\begin{proof}[Proof of Theorem~\ref{lem:unstable-implicit}]
	In this case, the implicit update satisfies the update rule
	\begin{align}
		\begin{bmatrix}
			\theta_{t+1} \\
			\omega_{t+1}
		\end{bmatrix} &= \begin{bmatrix}
			 	\theta_t  \\
				\omega_t
				\end{bmatrix}  -  \eta \begin{bmatrix}
			0 & C \\
			-C^T & 0
		\end{bmatrix} \cdot \begin{bmatrix}
			\theta_{t+1} \\
			\omega_{t+1}
		\end{bmatrix} \\
		\left( I + \eta \begin{bmatrix}
			0 & C \\
			-C^T & 0
		\end{bmatrix} \right) \cdot \begin{bmatrix}
			\theta_{t+1} \\
			\omega_{t+1}
		\end{bmatrix}  &=  \begin{bmatrix}
			 	\theta_t  \\
				\omega_t
				\end{bmatrix}\enspace . \label{eq:IU}
	\end{align}
	Let's analyze the singular values of the matrix $K:= \left( I + \eta \begin{bmatrix}
			0 & C \\
			-C^T & 0
		\end{bmatrix} \right)$, or equivalently the root of eigenvalues of $KK^T$
		\begin{align*}
			KK^T &=  \begin{bmatrix}
			I & \eta C \\
			-\eta C^T & I
		\end{bmatrix} \begin{bmatrix}
			I & -\eta C \\
			\eta C^T & I
		\end{bmatrix} \\
		&= \begin{bmatrix}
			I + \eta^2 CC^T & 0 \\
			0 & I + \eta^2 C^T C
		\end{bmatrix} \enspace .
		\end{align*}
		It is clear that the singular values of $K$, denoted by $\sigma_i(K)$ is sandwiched between 
		\begin{align*}
			\sqrt{1 + \eta^2 \lambda_{\min}(CC^T)} \leq \sigma_i(K) \leq \sqrt{1 + \eta^2 \lambda_{\max}(CC^T)} \enspace.
		\end{align*}
		If we choose $\eta = \frac{1}{\sqrt{\lambda_{\max}(CC^T)}}$, then
		\begin{align*}
			0 \leq \eta^2 \lambda_{\min}(CC^T) \leq \frac{\lambda_{\min}(CC^T)}{\lambda_{\max}(CC^T)} \leq 1 \enspace.
		\end{align*}
		Using the fact that for all $0\leq t\leq 1$, $1 + (\sqrt{2}-1) t \leq \sqrt{1+t}$, then
		\begin{align*}
			 \sigma_{\min}(K) &\geq \sqrt{1 + \eta^2 \lambda_{\min}(CC^T)} \geq 1 + (\sqrt{2}-1) \frac{\lambda_{\min}(CC^T)}{\lambda_{\max}(CC^T)} \enspace.
		\end{align*}
		Recall Eqn.~\eqref{eq:IU}, using the fact for all $0\leq t\leq 1$, $1/(1+(\sqrt{2}-1)t) \leq 1 - (1-1/\sqrt{2})t$,  we know
		\begin{align*}
			\sigma_{\min}(K) \cdot \left\| \begin{bmatrix}
			\theta_{t+1} \\
			\omega_{t+1}
		\end{bmatrix} \right\| & \leq \left\| \begin{bmatrix}
			\theta_{t} \\
			\omega_{t}
		\end{bmatrix} \right\| \\
		\left\| \begin{bmatrix}
					\theta_{t+1} \\
					\omega_{t+1}
				\end{bmatrix} \right\| & \leq \frac{1}{1 + (\sqrt{2}-1) \frac{\lambda_{\min}(CC^T)}{\lambda_{\max}(CC^T)}} \left\| \begin{bmatrix}
			\theta_{t} \\
			\omega_{t}
		\end{bmatrix} \right\| \\
		& \leq \left( 1 - (1-\frac{1}{\sqrt{2}})  \frac{\lambda_{\min}(CC^T)}{\lambda_{\max}(CC^T)} \right) \left\| \begin{bmatrix}
			\theta_{t} \\
			\omega_{t}
		\end{bmatrix} \right\| \enspace .
		\end{align*}
		To sum up, when
		$$
		T \geq \left\lceil (2+\sqrt{2})\frac{\lambda_{\max}(CC^T)}{\lambda_{\min}(CC^T)}  \log \frac{r}{\epsilon} \right\rceil \enspace ,
		$$
		one can ensure $\| (\theta_T, \omega_T) \|\leq \epsilon$.
\end{proof}

\begin{proof}[Proof of Theorem~\ref{lem:unstable-pm}]
	In the simple bi-linear game case,
		\begin{align*}
			\begin{bmatrix}
				\theta_{t+1} \\
				\omega_{t+1} 
			\end{bmatrix} &= \begin{bmatrix}
				\theta_{t} \\
				\omega_{t} 
			\end{bmatrix} - \begin{bmatrix}
				0 & \eta C \\
				-\eta C^T & 0
			\end{bmatrix} \begin{bmatrix}
				\theta_{t+1/2} \\
				\omega_{t+1/2} 
			\end{bmatrix} \enspace, \\
			& = \begin{bmatrix}
				\theta_{t} \\
				\omega_{t} 
			\end{bmatrix} - \begin{bmatrix}
				0 & \eta C \\
				-\eta C^T & 0
			\end{bmatrix} \begin{bmatrix}
				I & -\gamma C \\
				\gamma C^T & I
			\end{bmatrix} \begin{bmatrix}
				\theta_{t} \\
				\omega_{t} 
			\end{bmatrix} \enspace , \\
			&=  \begin{bmatrix}
				I-\eta \gamma CC^T & -\eta C \\
				\eta C^T & I-\eta \gamma C^T C
			\end{bmatrix}
			\begin{bmatrix}
				\theta_{t} \\
				\omega_{t} 
			\end{bmatrix} \enspace.
		\end{align*}
		Note this linear system is the same as that in Thm.~\ref{lem:unstable-co}. Therefore the convergence analysis follows in the same way as Thm.~\ref{lem:unstable-co}. 
\end{proof}

\end{document}